\documentclass[letterpaper]{article} 
\usepackage[draft]{aaai2026}  
\usepackage{times}  
\usepackage{helvet}  
\usepackage{courier}  
\usepackage[hyphens]{url}  
\usepackage{graphicx} 
\usepackage{natbib}  
\usepackage{caption} 
\usepackage{algorithm}
\usepackage{algorithmic}
\usepackage{amsmath}
\usepackage{amssymb}
\usepackage{mathtools}
\usepackage{amsthm}
\usepackage{booktabs}   
\theoremstyle{plain}
\newtheorem{theorem}{Theorem}[section]

\theoremstyle{definition}

\theoremstyle{remark}

\newcommand{\bi}{\begin{itemize}}
\newcommand{\ei}{\end{itemize}}

\newcommand{\FIG}{Fig.~}

\newcommand{\EQN}{Eq.~}
\newcommand{\THM}{Thm.~}

\newcommand{\Dcal}{\mathcal{D}}

\usepackage{subcaption}

\usepackage{newfloat}
\usepackage{listings}
\DeclareCaptionStyle{ruled}{labelfont=normalfont,labelsep=colon,strut=off} 
\floatstyle{ruled}
\newfloat{listing}{tb}{lst}{}
\floatname{listing}{Listing}
\title{Risk-Sensitive Exponential Actor Critic}
\author{
    Alonso Granados,
    Jason Pacheco}
\affiliations{
    Department of Computer Science, University of Arizona\\
    \{alonsog, pachecoj\}@arizona.edu

}

\usepackage{bibentry}

\begin{document}

\maketitle

\begin{abstract}
Model-free deep reinforcement learning (RL) algorithms have achieved tremendous success on a range of challenging tasks. However, safety concerns remain when these methods are deployed on real-world applications, necessitating risk-aware agents.  A common utility for learning such risk-aware agents is the entropic risk measure, but current policy gradient methods optimizing this measure must perform high-variance and numerically unstable updates.  As a result, existing risk-sensitive model-free approaches are limited to simple tasks and tabular settings. In this paper, we provide a comprehensive theoretical justification for policy gradient methods on the entropic risk measure, including on- and off-policy gradient theorems for the stochastic and deterministic policy settings.  Motivated by theory, we propose risk-sensitive exponential actor-critic (rsEAC), an off-policy model-free approach that incorporates novel procedures to avoid the explicit representation of exponential value functions and their gradients, and optimizes its policy w.r.t.~the entropic risk measure. We show that rsEAC produces more numerically stable updates compared to existing approaches and reliably learns risk-sensitive policies in challenging risky variants of continuous tasks in MuJoCo.
\end{abstract}


\section{Introduction}
Model-free deep reinforcement learning (RL) algorithms have been successful at learning complex behaviors using only interactions with the environment \citep{mnih2015human, haarnoja2018soft, gu2017deep}. However, interacting with real-world
applications, such as autonomous driving \citep{mavrin2019distributional}, robotics \citep{nass2019entropic} and finance \citep{artzner1999coherent, lai2011mean}, can lead to catastrophic events, motivating a need for risk-aware agents in decision-making problems. Risk-sensitive RL aims to learn policies that maximize performance metrics that incorporate a penalty for the variability of the return --- either due to intrinsic uncertainty in the transition dynamics or randomness in the reward signal. 

Risk-sensitive RL has been studied through various risk criteria: variance-related risk measures \citep{tamar2012policy, la2013actor}, reward-volatility risk \citep{ijcai2020p632, zhang2021mean}, Gini-deviation \citep{luo2023alternative},  distortion risk \citep{dabney2018implicit}, and conditional value at risk (CVaR) \citep{morimura2010nonparametric, chow2014algorithms}. In this paper, we consider risk-sensitive RL with the entropic risk measure. This metric incorporates risk into its objective via the exponential utility function \citep{howard1972risk}, resulting in a nonlinear Bellman equation. In contrast to the standard RL problem, sample-based estimates of the expectation operation are biased. Therefore, most of the work under this framework has been in the context of MDP control, where the transition dynamics and reward model are known \citep{garcia2015comprehensive}. The exponential Bellman equations provide a model-free approach that leads to improved regret bounds for risk-sensitive RL \cite{fei2021exponential}. In the approximate setting, we can minimize the exponentiated TD error, which accelerates the learning process and produces robust policies to model perturbations \cite{noorani2023exponential}. However, the estimation of these functions is numerically unstable.

Policy gradient algorithms improve their policy by updating it in the direction of the performance gradient. The fundamental result for these methods is the policy gradient theorem \citep{sutton1999policy}, which allows for the estimation of the gradient using only sampled trajectories. In risk-sensitive RL, an estimate of the gradient can be derived using the likelihood ratio trick \citep{nass2019entropic}. However, the estimator may suffer from high variance as its computation is w.r.t. the full trajectory, and it is scaled by the exponentiated return, which can be numerically unstable. These numerical instabilities are further exacerbated when introducing function approximators \citep{maei2009convergent}. 


We now present our contributions to address these shortcomings. First, we derive risk-sensitive policy gradient theorems for the on-policy setting and approximations for the off-policy setting. These results circumvent the need to estimate a gradient for the full trajectory. Second, we study the instabilities that emerge from learning a critic network that approximates the exponentiated return, and propose a representation that avoids its explicit computation by allowing for computation in log-domain. Additionally, we consider a stabilizing mechanism for the critic gradient, based on mini-batch normalization and clipping. Finally, we present risk-sensitive exponential actor-critic (rsEAC), a practical off-policy algorithm that optimizes the entropic risk measure where the actor is optimized using our risk-sensitive off-policy gradient, and the critic incorporates our stabilizing mechanisms to estimate the entropic risk measure for the current policy. We evaluate our algorithm on complex continuous tasks where risk-aversion can be verified, and demonstrate that our method can learn risk-sensitive policies with high-return. All code is included in the supplemental materials.

\section{Background}
Consider a finite horizon Markov decision process (MDP) which is composed of: a state space $\mathcal{S}$, an action space $\mathcal{A}$, an initial state distribution $p_1(s_1)$, a transition dynamics distribution $p(s_{t+1}|s_t, a_t)$ that satisfies the Markov property $p(s_{t+1}|s_t, a_t, 
\dots, s_1,a_1) = p(s_{t+1}|s_t, a_t)$, and a reward function $r :\mathcal{S} \times \mathcal{A} \to \mathbb{R}$, which we abbreviate as $r_t := r(s_t, a_t)$. We assume that the agent does not have access to the transition dynamics and the reward function. The agent selects actions based on a policy denoted by $\pi_\theta(a_t|s_t)$, a conditional distribution over $\mathcal{A}$ parameterized by $\theta \in \mathbb{R}^n$. The distribution over a trajectory $\tau = (s_1,a_1,\dots,s_T,a_t,s_{T+1})$ for a policy $\pi_\theta$ is given by \mbox{$p_\pi(\tau) = p(s_1) \prod_{t=1}^T p(s_{t+1} \mid s_t,a_t) \pi_\theta(a_t \mid s_t)$}. Value functions are defined to be the expected cumulative rewards, $V_{\pi_\theta}(s) = \mathbb{E}_{p_{\pi}(\tau)}[\sum_{t=1}^T r_t|s_1=s]$ and $Q_{\pi_\theta}(s, a) = \mathbb{E}_{p_{\pi}(\tau)}[\sum_{t=1}^T r_t|s_1=s, a_1= a]$. We denote the distribution of state $s'$ after transitioning for $t$ time steps from state s by $p(s \to s', t, \pi)$. We also denote the state distribution by $\rho_\pi(s') :=\int_S \sum_{t=1}^T p_1(s)p(s \to s', t, \pi)\,\mathrm{d}s$. The standard objective in RL is to find a policy that maximizes expected return $J(\pi_\theta) =  \mathbb{E}_{p_{\pi}(\tau)}[\sum_{t=1}^T r_t]$. 
\subsection{Policy Gradient Algorithms}

Policy gradient algorithms update the parameters of their policy in the direction of the performance gradient $\nabla_\theta J(\pi_\theta)$. For stochastic policies $\pi_\theta$, the gradient can be derived from the policy gradient theorem \citep{sutton1999policy},
\begin{equation}\label{eq:stochastic_PG}
\nabla_\theta J(\pi_\theta) = \int_S \rho_{\pi}(s) \int_A \nabla_\theta \pi_\theta (a|s)Q_{\pi_\theta}(s,a) \,\mathrm{d}a \,\mathrm{d}s.
\end{equation}
Although the state distribution $\rho_\pi(s)$ depends on the policy parameters, the gradient surprisingly does not involve the derivative of the state distribution. The work by \citet{silver2014deterministic} extended this result to deterministic policies $\mu_\theta$, yielding an analogous gradient that can be estimated more efficiently than the stochastic policy gradient as it lacks the integral over actions:
\begin{equation}
\nabla_\theta J(\mu_\theta) = \!\! \int_S \!\!\rho_{\mu}(s) \nabla_\theta \mu_\theta (s) \nabla_a Q_{\mu_\theta}(s,a)|_{a= \mu_\theta(s)}  \,\mathrm{d}s.
\end{equation}
These two results can be extended to off-policy trajectories sampled by a different behavior policy $b(a\mid s) \neq \pi_\theta (a\mid s)$. The performance objective is modified to be the value function of the target policy $V_{\pi_\theta}$, weighted by the state distribution of the behavior policy, $J_b(\pi_\theta) = \int_S \rho_{b}(s) V_{\pi_\theta}(s) \,\mathrm{d}s$.
The gradient of this objective can be approximated by an off-policy variant of the policy gradient theorem \citep{degris2012off}:
\begin{equation}
\nabla_\theta J_b(\pi_\theta) \approx \int_{\mathcal{S}} \int_{\mathcal{A}} \rho_{b}(s) \nabla_\theta \pi_\theta (a|s)Q_{\pi}(s,a) \,\mathrm{d}a \,\mathrm{d}s.
\end{equation}
This approximation drops a term that depends on $\nabla_\theta Q_{\pi_\theta}(s,a)$, because it can be difficult to estimate in an off-policy fashion. However, this approximation still produces policy improvement when using a tabular representation for the policy \citep{degris2012off}.

\subsection{Risk-Sensitive Objective}\label{sec:entropic_risk}
In risk-sensitive RL with the entropic risk measure, we aim to find a policy that maximizes:
\begin{equation}\label{eq:log_entropic}
J^\beta(\pi_\theta) =  \frac{1}{\beta} \log \mathbb{E}_{p_\pi(\tau)} \left[e^{\beta\sum_{t=1}^T r_t} \right],
\end{equation}
where $\beta \in \mathbb{R}$ controls risk preference. This objective is closely related to mean-variance RL~\citep{mannor2011mean} --- it admits the Taylor expansion \mbox{$\mathbb{E}_{p_\pi(\tau)}[\sum_t r_t] +\frac{\beta}{2}\text{Var}_\pi(\sum_t r_t) +O(\beta^2)$} \citep{mihatsch2002risk}.
Thus, \EQN\eqref{eq:log_entropic} reduces to the standard (\emph{risk-neutral}) RL objective for $\beta \to 0$. In addition, $\beta > 0$ induces \emph{risk-seeking} policies and $\beta <0$ induces \emph{risk-averse} policies. We now define soft-value functions as the entropic risk measure of cumulative rewards, 
\begin{gather}
\!\!V_{\pi_\theta}^\beta(s) =\! \frac{1}{\beta}\! \log \mathop{\mathbb{E}}_{p_\pi(\tau)} \left[e^{\beta\sum_{t=1}^T r_t} | s_1\! = s \right]\! ,\! \quad \!\!\!\! \\ Q_{\pi_\theta}^\beta(s,a)\! = \!\frac{1}{\beta} \log \mathop{\mathbb{E}}_{p_\pi(\tau)} \left[ e^{\beta \sum_{t=1}^T r_t}\mid \! s_1 \! = \! s, \! a_1 \! = \! a\right].\label{eq:bellman_V}
\end{gather}
These value functions are recursively associated via Bellman-style backup equations:
\begin{gather}
V_{\pi_\theta}^\beta(s_t) \! =\!  \frac{1}{\beta} \log \mathop{\mathbb{E}}_{\pi(a_t|s_t)} \left[e^{\beta Q_{\pi_\theta}^\beta(s_t,a_t)}\right]\!,\!\!\! \quad \\
 Q_{\pi_\theta}^\beta(s_t,a_t)\! = \! r_t\! + \!\frac{1}{\beta} \log \mathop{\mathbb{E}}_{p(s_{t+1} \mid s_t, a_t)} \left[e^{\beta V_{\pi_\theta}^\beta(s_{t+1})}\right].\label{eq:bellman_Q}
\end{gather}
The main challenge of risk-sensitive RL lies in its nonlinear Bellman equations. In contrast to the standard RL problem, sample-based estimates of the value functions are biased due to the $\log(\cdot)$ operator. Hence, this line of work is dominated by the MDP control framework, where the transition dynamics and the reward model are available to the agent \citep{garcia2015comprehensive}.

\section{Gradients of the Entropic Risk Measure}\label{sec:theoretical}
The entropic risk measure commonly invoke the likelihood ratio trick to obtain an estimate of the gradient of the performance objective \citep{nass2019entropic}. However, these estimates suffer from high variance, as they are computed w.r.t. full trajectories, and are proportional to the exponentiated return, which can lead to numerical instabilities during learning. In this section, we address the first issue by extending the policy gradient framework to risk-sensitive policies under the entropic risk measure. First, we introduce policy gradient theorems under the on-policy setting for both stochastic (\THM\ref{thm:stoch_pg}) and deterministic (\THM\ref{thm:det_pg}) policies. Then, we extend these two theorems to the off-policy setting by approximating these gradients, and demonstrate policy improvements for these approximations when using a tabular representation of the policy (\THM\ref{thm:policy_improvement}). All proofs can be found in Appendix A.





\subsection{On-policy Risk-Sensitive Policy Gradients}
We begin by defining the exponential twisting of dynamics and policy \citep{ijcai2021p316} as: $p^*(s'|s,a) \!\propto p(s'|s,a) e^{\beta V_{\pi_\theta}^\beta(s')}$, and $\pi_\theta^*(a|s) \propto \pi_\theta(a|s) e^{\beta Q_{\pi_\theta}^\beta(s,a)},$ respectively, and the state distribution for dynamics $p^*(s'|s,a)$ and policy $\pi^*_\theta(a|s)$ as $\rho_\pi^*(s)$. We now provide our risk-sensitive policy gradient theorem for stochastic policies.
\renewcommand\thetheorem{1}
\begin{theorem}\label{thm:stoch_pg}
(Risk-Sensitive Stochastic Policy Gradient Theorem). The gradient of $J^\beta(\pi_\theta)$ w.r.t. $\theta$ is given by:
\begin{gather}
\frac{1}{\beta} \int_S \rho_{\pi}^*(s) \int_A \nabla_\theta \pi_\theta (a|s)e^{\beta (Q_{\pi_\theta}^\beta(s,a)- V_{\pi_\theta}^\beta(s))} \,\mathrm{d}a \,\mathrm{d}s.
\end{gather}
\end{theorem}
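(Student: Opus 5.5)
Following the classical argument of \citet{sutton1999policy}, I will work with the exponentiated value functions rather than the soft-value functions. Define $U(s) := e^{\beta V_{\pi_\theta}^\beta(s)}$ and $W(s,a) := e^{\beta Q_{\pi_\theta}^\beta(s,a)}$. The Bellman equations \eqref{eq:bellman_V}--\eqref{eq:bellman_Q} then become linear:
\begin{equation*}
U(s)=\int_A \pi_\theta(a|s) W(s,a)\,\mathrm{d}a, \qquad W(s,a)=e^{\beta r(s,a)}\int_S p(s'|s,a)\, U(s')\,\mathrm{d}s'.
\end{equation*}
Since $J^\beta(\pi_\theta)=\frac{1}{\beta}\log \int_S p_1(s)U(s)\,\mathrm{d}s$, the chain rule gives
\begin{equation*}
\nabla_\theta J^\beta=\frac{1}{\beta}\,\frac{\int_S p_1(s)\nabla_\theta U(s)\,\mathrm{d}s}{\int_S p_1(s)U(s)\,\mathrm{d}s}.
\end{equation*}
Because the horizon is finite, the functions are time-indexed: $U_t$ and $W_t$, with $U_{T+1}\equiv 1$. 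I will suppress the index wherever it is harmless.

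The first step is to differentiate the linear recursion. This gives
\begin{equation*}
\nabla U(s)=\int_A \nabla\pi_\theta(a|s)W(s,a)\,\mathrm{d}a+\int_A\pi_\theta(a|s)e^{\beta r(s,a)}\int_S p(s'|s,a)\nabla U(s')\,\mathrm{d}s'\,\mathrm{d}a.
\end{equation*}
The key rewriting is to divide by $U(s)$ and set $g(s):=\nabla U(s)/U(s)$. The second term then becomes
\begin{equation*}
\int_A \frac{\pi_\theta(a|s)W(s,a)}{U(s)}\int_S \frac{p(s'|s,a)e^{\beta r}U(s')}{W(s,a)}\,g(s')\,\mathrm{d}s'\,\mathrm{d}a.
\end{equation*}
Here the first ratio is exactly $\pi^*_\theta(a|s)$, whose normalizer is $U(s)$. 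The second ratio is exactly $p^*(s'|s,a)$, whose normalizer is $\int p\,U=W e^{-\beta r}$. So the recursion takes the form
\begin{equation*}
g(s)=\int_A \nabla\pi_\theta(a|s)\,e^{\beta(Q^\beta-V^\beta)}\,\mathrm{d}a+\mathbb{E}_{\pi^*,p^*}[g(s')].
\end{equation*}
This has the same structure as the risk-neutral recursion, except that transitions are taken under the twisted kernel.

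The second step is to unroll this recursion over $t=1,\dots,T$, which terminates because $\nabla U_{T+1}=0$. This yields $g(s)=\sum_t\int p^*(s\to s',t,\pi^*)\,\phi(s')\,\mathrm{d}s'$, where $\phi$ denotes the first term.

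The third step handles the initial distribution. Write $p_1(s)U(s)/\int p_1U$, which is the twisted initial distribution. The initial state distribution must itself be twisted in order for $\rho^*_\pi$ to come out correctly. I will interpret $\rho^*_\pi$ as the state distribution generated by the twisted initial state, the twisted policy, and the twisted dynamics, which is consistent with the definition of $p^*$ (the twisting by $e^{\beta V}$ of the next state). With that interpretation, $\frac{\int p_1\nabla U}{\int p_1 U}=\int \tilde p_1(s)g(s)\,\mathrm{d}s=\int\rho^*_\pi(s)\phi(s)\,\mathrm{d}s$, and multiplying by $\frac1\beta$ gives the stated formula.

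The main obstacle I expect is bookkeeping rather than any single analytic step. In finite horizon, $V^\beta$ and $Q^\beta$ depend on the remaining horizon, so the twisted kernels are time-inhomogeneous. I must verify that the normalizers match exactly at each step, and in particular that $p^*$ uses $V$ at the next time index. I also need to state the regularity conditions that permit exchanging differentiation and integration, namely that $\pi_\theta$ is differentiable and that $W$ and $\nabla\pi$ are bounded or integrable.
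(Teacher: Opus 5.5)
Your proposal is correct and is essentially the paper's proof. Since $g=\nabla_\theta U/U=\beta\nabla_\theta V^\beta_{\pi_\theta}$, your recursion is just $\beta$ times the paper's recursion $\nabla_\theta V^\beta_{\pi_\theta}(s)=\frac{1}{\beta}\int_A\nabla_\theta\pi_\theta(a|s)e^{\beta(Q^\beta_{\pi_\theta}-V^\beta_{\pi_\theta})}\,\mathrm{d}a+\mathbb{E}_{\pi^*_\theta,p^*}[\nabla_\theta V^\beta_{\pi_\theta}(s')]$, which the paper obtains by differentiating the log-form Bellman equations directly, then unrolls and combines with the twisted initial distribution $p_1^*\propto p_1e^{\beta V^\beta_{\pi_\theta}}$ exactly as you do; its appendix defines $\rho^*_\pi$ this way, confirming your interpretation, while your attention to the time-indexed, time-inhomogeneous twisted kernels is a point of care the paper glosses over.
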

This gradient has two key differences from its risk-neutral version in \EQN\eqref{eq:stochastic_PG}.  First, it is weighted by the state distribution of exponential twisted dynamics and policy, which tend to be optimistic or pessimistic, depending on $\beta$.  Second, the action-value function is substituted by $e^{\beta (Q_{\pi_\theta}^\beta(s,a) - V_{\pi_\theta}^\beta(s))}$, which can lead to numerical instabilities during learning due to the exponentiated value functions. For deterministic policies, we analogously define the state distribution $\rho_\mu^*(s)$. We now present our risk-sensitive policy gradient for deterministic policies $\mu_\theta$.
\renewcommand\thetheorem{2}
\begin{theorem}\label{thm:det_pg}
(Risk-Sensitive Deterministic Policy Gradient Theorem). The gradient of~$J^\beta(\mu_\theta)$ w.r.t. $\theta$ is given by:
\begin{gather}
\int_S \rho_{\mu}^*(s) \nabla_\theta \mu_\theta(s) \nabla_a Q_{\mu_\theta}^\beta(s,a)|_{a = \mu_\theta(s)} \,\mathrm{d}s.
\end{gather}
\end{theorem}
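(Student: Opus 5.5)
The plan is to follow the structure of the proof of the deterministic policy gradient theorem of \citet{silver2014deterministic}, working with the exponentiated value functions so that the Bellman equations become linear. For a deterministic policy the backups of Section~\ref{sec:entropic_risk} reduce to $V_{\mu_\theta}^\beta(s) = Q_{\mu_\theta}^\beta(s,\mu_\theta(s))$ and
\[
e^{\beta Q_{\mu_\theta}^\beta(s,a)} = e^{\beta r(s,a)} \int_S p(s'|s,a)\, e^{\beta V_{\mu_\theta}^\beta(s')}\,\mathrm{d}s'.
\]
I will take $J^\beta(\mu_\theta) = \int_S p_1(s) V^\beta_{\mu_\theta}(s)\,\mathrm{d}s$ in the single-start-state sense that the theorem implicitly uses. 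Equivalently, the twisted start distribution $p_1^*(s)\propto p_1(s)e^{\beta V(s)}$ is absorbed into $\rho^*_\mu$. I will use time-indexed value functions, as the horizon is finite.

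Step 1 is to differentiate $V^\beta_{\mu_\theta}(s) = Q^\beta_{\mu_\theta}(s,\mu_\theta(s))$ with the chain rule. This gives
\[
\nabla_\theta V^\beta(s) = \nabla_\theta\mu_\theta(s)\,\nabla_a Q^\beta(s,a)|_{a=\mu_\theta(s)} + \nabla_\theta Q^\beta(s,a)|_{a=\mu_\theta(s)},
\]
where the last term is the derivative through $\theta$ with $a$ held fixed. Step 2 is to differentiate the Q-backup at fixed $a$. Writing $Q^\beta = r + \frac1\beta\log\int p(s'|s,a)e^{\beta V^\beta(s')}\mathrm{d}s'$, we get
\[
\nabla_\theta Q^\beta(s,a) = \frac{\int p(s'|s,a)e^{\beta V^\beta(s')}\nabla_\theta V^\beta(s')\,\mathrm{d}s'}{\int p(s'|s,a)e^{\beta V^\beta(s')}\,\mathrm{d}s'} = \int_S p^*(s'|s,a)\,\nabla_\theta V^\beta(s')\,\mathrm{d}s'.
\]
The factor $1/\beta$ cancels against the $\beta$ produced by differentiating the exponential. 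This is exactly the exponential twisting of the dynamics, and it is the one place where the risk-sensitive structure enters.

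Step 3 is to combine these into the recursion
\[
\nabla_\theta V^\beta(s) = \nabla_\theta\mu_\theta(s)\nabla_a Q^\beta(s,a)|_{a=\mu_\theta(s)} + \int p^*(s'|s,\mu_\theta(s))\nabla_\theta V^\beta(s')\,\mathrm{d}s'.
\]
Unrolling it over $t=1,\dots,T$ with $\nabla_\theta V_{T+1}=0$ gives
\[
\nabla_\theta V^\beta(s) = \sum_t \int p^*(s\to s',t,\mu)\,\nabla_\theta\mu_\theta(s')\nabla_a Q^\beta(s',a)|_{a=\mu_\theta(s')}\,\mathrm{d}s'.
\]
Integrating against $p_1$ and collecting terms into $\rho^*_\mu$ then yields the theorem. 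For a deterministic policy the policy twisting is trivial, so $\rho^*_\mu$ is generated by $p^*$ alone.

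The main obstacle is regularity bookkeeping rather than algebra. I need to justify exchanging differentiation and integration, which I would assume via continuity of $p$, $r$, $\mu_\theta$ and their derivatives, together with boundedness of $e^{\beta V}$, as in \citet{silver2014deterministic}. I also need to state carefully the convention for the start distribution and the time-indexing of $p^*$ in $\rho^*_\mu$. If the objective is averaged over $p_1$ inside the log, the outer derivative produces a twisted initial distribution $p_1(s)e^{\beta V(s)}/\mathbb{E}_{p_1}[e^{\beta V}]$. I would fold this into the definition of $\rho_\mu^*$, consistent with how $\rho^*_\pi$ is used in Theorem~\ref{thm:stoch_pg}.
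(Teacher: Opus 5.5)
Your proposal is correct and follows the paper's proof essentially step for step. The paper differentiates $V^\beta_{\mu_\theta}(s)=Q^\beta_{\mu_\theta}(s,\mu_\theta(s))$ by expanding $Q^\beta$ and regrouping the $\nabla_a r$ and $\nabla_a p$ terms into $\nabla_a Q^\beta$, which is the same as your chain-rule split; it then identifies the remaining term as an expectation under the twisted dynamics $p^*$, unrolls the recursion, and differentiates $\frac{1}{\beta}\log\int p_1 e^{\beta V^\beta}$ to get the twisted start distribution $p_1^*$ built into $\rho^*_\mu$. The one thing to tidy is your opening claim that $\int p_1 V^\beta$ is ``equivalent'' to the objective, which is false unless $p_1$ is a point mass, but your final paragraph already corrects this exactly as the paper does.
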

In particular, we notice that there is no need for an exponential twisted policy. The deterministic gradient is more convenient for two reasons: first, it avoids the integral over actions resulting in a more efficient approach, and second, it avoids the evaluation of an exponential term, which can be numerically unstable during learning.

\subsection{Off-policy Risk-Sensitive Policy Gradients}
We now consider methods that optimize a risk-sensitive policy from trajectories generated by an arbitrary behavior policy $b(a\mid s)$. To achieve this, we consider the soft-value functions of a target deterministic policy weighted by the state distribution of the behavior policy as our objective, $J_b^\beta(\mu_\theta) = \int_S \rho_{b}(s) V_{\mu_\theta}^\beta(s) \,\mathrm{d}s,$ where $\rho_b$ is the state distribution under the behavior policy.  Similarly to \citep{degris2012off}, we propose an approximation of the gradient $\nabla_\theta J_b^\beta(\mu_\theta)$:
\begin{equation}\label{eq:risk_off_gradient}
\int_{\mathcal{S}} \rho_{b}(s) \nabla_\theta \mu_\theta (s) \nabla_a Q_{\mu_\theta}^\beta(s,a)|_{a=\mu_\theta(s)} \,\mathrm{d}s \coloneqq g(\mu_\theta).
\end{equation}
This approximation drops a term that depends on $\nabla_\theta Q_{\mu_\theta}^\beta(s,a)$. We show that this approximation results in policy improvement when using a tabular representation for the policy.
\renewcommand\thetheorem{3}
\begin{theorem}\label{thm:policy_improvement}
(Deterministic Policy Improvement). Given a policy $\mu$ with parameters $\theta$ and a linear tabular representation, let $\theta' = \theta +\alpha g(\mu_\theta)$. Then there exist an $\epsilon$ such that for all $\alpha < \epsilon$:
\begin{gather}
V_{\mu_{\theta}}^\beta(s) \leq V_{\mu_{\theta'}}^\beta(s) \qquad \forall s \in S.
\end{gather}
\end{theorem}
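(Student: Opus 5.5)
Our plan is to mirror the tabular policy-improvement argument of \citet{degris2012off}, adapted to the nonlinear soft Bellman backups. With a linear tabular representation, the action in each state has its own parameter block, $\mu_\theta(s) = \theta_s$, so $\nabla_\theta \mu_\theta(s)$ is a selection matrix. Then $g(\mu_\theta)$ decouples across states: its block for state $s$ is $\rho_b(s)\nabla_a Q^\beta_{\mu_\theta}(s,a)|_{a=\mu_\theta(s)}$. The update therefore reads
\begin{equation*}
\mu_{\theta'}(s) = \mu_\theta(s) + \alpha\,\rho_b(s)\,\nabla_a Q^\beta_{\mu_\theta}(s,a)|_{a=\mu_\theta(s)} ,
\end{equation*}
which is a per-state gradient ascent step on $a \mapsto Q^\beta_{\mu_\theta}(s,a)$ with the critic held fixed at the old policy. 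We assume $Q^\beta_{\mu_\theta}(s,\cdot)$ is continuously differentiable in $a$, and that $\mathcal{S}$ is finite or the relevant bounds are uniform.

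\textbf{Step 1 (one-step improvement).} By a first-order Taylor expansion, for each $s$ we have
\begin{equation*}
Q^\beta_{\mu_\theta}(s,\mu_{\theta'}(s)) = Q^\beta_{\mu_\theta}(s,\mu_\theta(s)) + \alpha\rho_b(s)\|\nabla_a Q^\beta_{\mu_\theta}\|^2 + o(\alpha).
\end{equation*}
Hence there is $\epsilon_s>0$ such that $Q^\beta_{\mu_\theta}(s,\mu_{\theta'}(s)) \ge Q^\beta_{\mu_\theta}(s,\mu_\theta(s)) = V^\beta_{\mu_\theta}(s)$ for all $\alpha<\epsilon_s$. If the gradient vanishes, equality holds trivially. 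Setting $\epsilon=\min_s \epsilon_s$ gives $\epsilon>0$ in the finite-state case; otherwise we assume a uniform Lipschitz bound on $\nabla_a Q^\beta$.

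\textbf{Step 2 (unrolling via monotonicity of the soft backup).} For a deterministic policy, the Bellman equation \eqref{eq:bellman_Q} gives $V^\beta_{\mu}(s)=Q^\beta_\mu(s,\mu(s))$ and
\begin{equation*}
Q^\beta_{\mu}(s,a) = r(s,a) + \tfrac{1}{\beta}\log \mathbb{E}_{p(s'|s,a)}\bigl[e^{\beta V^\beta_\mu(s')}\bigr].
\end{equation*}
The key fact is that the operator $V\mapsto \tfrac1\beta\log\mathbb{E}[e^{\beta V}]$ is monotone in $V$ for every $\beta\neq 0$. For $\beta>0$ both $\exp(\beta\cdot)$ and $\tfrac1\beta\log$ are increasing. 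For $\beta<0$ both reverse order, and the two reversals cancel. Starting from Step 1, we obtain
\begin{equation*}
V^\beta_{\mu_\theta}(s) \le r(s,\mu'(s)) + \tfrac1\beta\log\mathbb{E}_{p(s'|s,\mu'(s))}\bigl[e^{\beta V^\beta_{\mu_\theta}(s')}\bigr],
\end{equation*}
where $\mu'=\mu_{\theta'}$. We then substitute the same inequality for $V^\beta_{\mu_\theta}(s')$ inside the expectation and use monotonicity to push it through. Iterating over the finite horizon $T$, the right-hand side becomes $\tfrac1\beta\log\mathbb{E}_{p_{\mu'}}[e^{\beta\sum_t r_t}\mid s_1=s]=V^\beta_{\mu'}(s)$. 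Strictly, the induction runs backward over time steps with time-indexed values; the argument is identical at each step, with the terminal value equal to zero.

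\textbf{Main obstacle.} The delicate step is Step 1's uniformity of $\epsilon$ across states, together with the fact that the gradient step is taken on $Q^\beta_{\mu_\theta}$ rather than on the true objective. The dropped $\nabla_\theta Q^\beta$ term is exactly why we need the monotone-unrolling argument instead of a direct gradient argument. We will handle the uniformity by stating the finite-state (tabular) assumption explicitly and taking the minimum over states. We will also verify carefully the sign bookkeeping for $\beta<0$ in Step 2.
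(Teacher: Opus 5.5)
Your proposal follows the paper's proof. The tabular decoupling turns $g(\mu_\theta)$ into a per-state ascent step giving $Q^\beta_{\mu_\theta}(s,\mu_{\theta'}(s))\ge V^\beta_{\mu_\theta}(s)$, and this is then unrolled through the soft Bellman backup. Two of your choices are cleaner than the paper's. Your monotonicity of $V\mapsto\frac{1}{\beta}\log\mathbb{E}[e^{\beta V}]$ handles both signs of $\beta$ at once, whereas the paper writes out only the $\beta>0$ chain in the exponential domain. Your per-state Taylor step is also more explicit than the paper's passage through the $\rho_b$-weighted sum.

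One correction: drop the aside that a time-indexed backward induction is ``identical at each step''. With a stationary tabular policy stepped along the full-horizon $\nabla_a Q^\beta_{\mu_\theta}$, Step 1 gives the inequality only for the horizon-$T$ action values. It does not give the shorter-horizon inequalities needed deeper in the unrolling, and simple two-state examples violate them. Like the paper, you should argue with the stationary Bellman equations, or give the policy time-indexed parameters.
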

For completeness, we consider an equivalent result for an off-policy performance objective that uses a stochastic policy $\pi_\theta$ as its target policy in Appendix A.

\section{Risk-Sensitive Exponential
Actor-Critic}\label{sec:rseac}
\begin{figure*}[t]
    \centering
     \begin{subfigure}[b]{0.26\textwidth}
         \includegraphics[width=\textwidth]{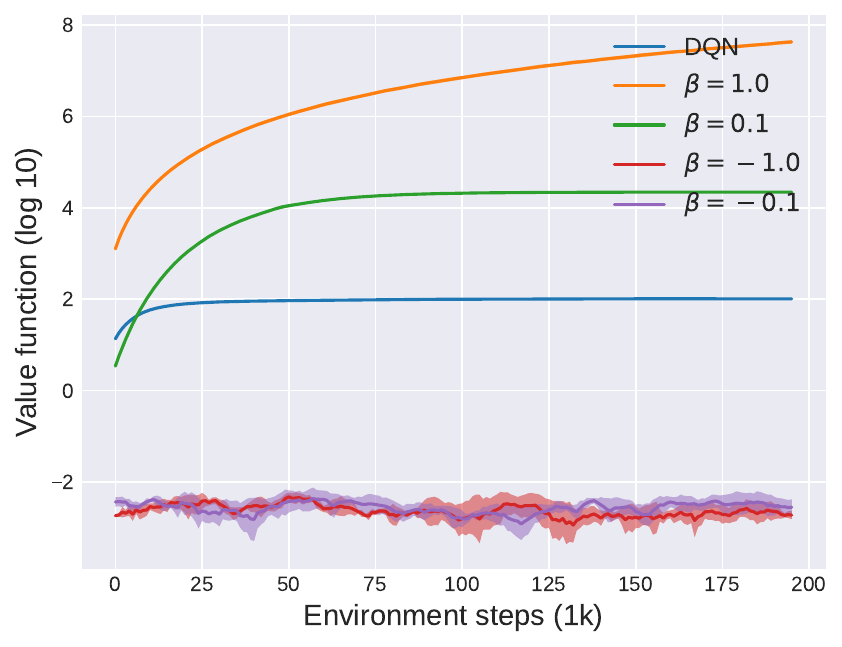}
         \caption{Average Initial Value}
         \label{fig:unstable_value}
     \end{subfigure}
     \begin{subfigure}[b]{0.26\textwidth}
         \includegraphics[width=\textwidth]{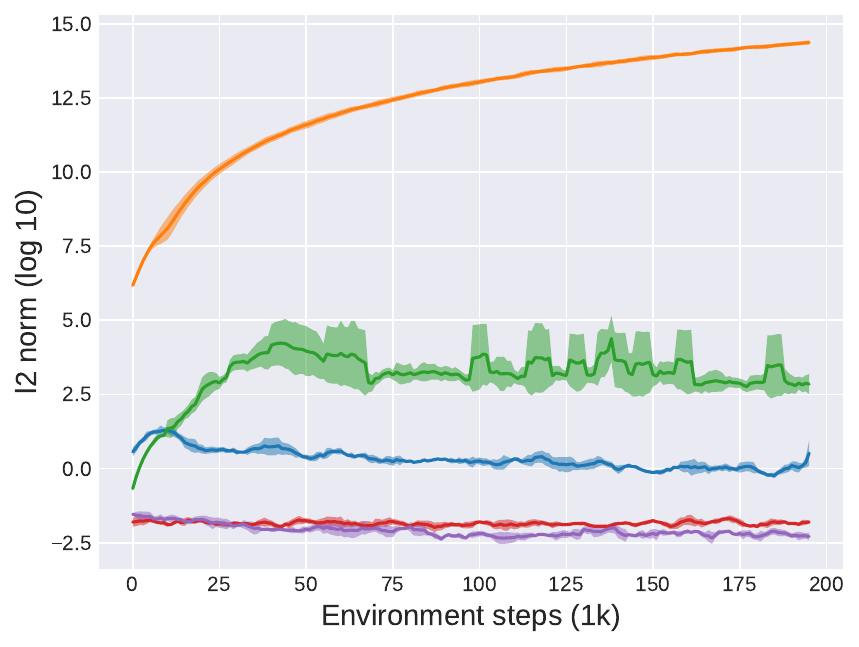}
         \caption{L2 Gradient Norm}
         \label{fig:unstable_gradient}
     \end{subfigure}
     \begin{subfigure}[b]{0.26\textwidth}
         \includegraphics[width=\textwidth]{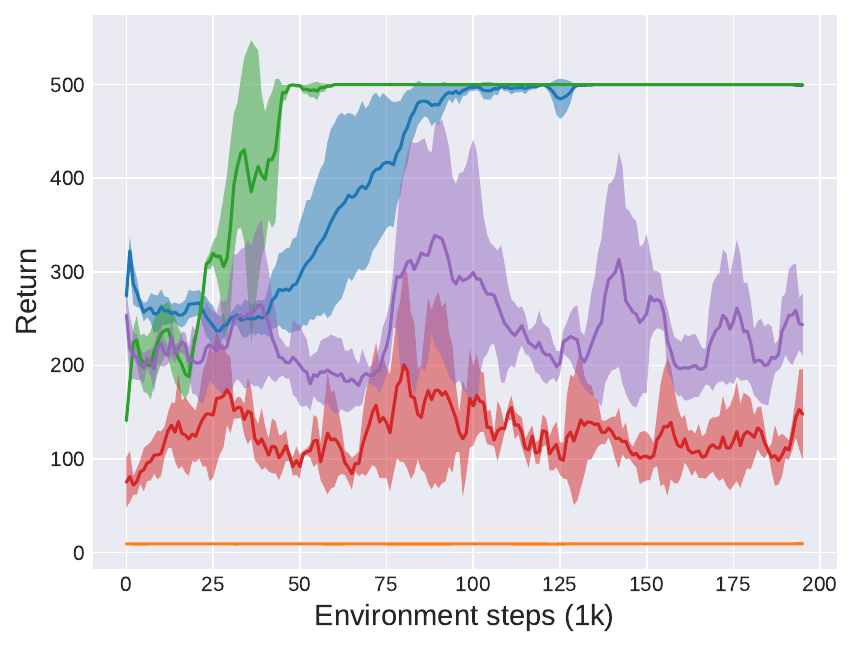}
         \caption{Average Performance}
         \label{fig:unstable_performance}
     \end{subfigure} 
    
     \caption{\textbf{Instabilities in value functions and gradients.} We train value functions for DQN and $Z_\psi(s_t,a_t)$ for a range of $\beta$ settings ($-1, -0.1, 0.1, 1$). We plot the average log-value estimates (\emph{left}), the L2 gradient norm in log-domain (\emph{center}), and the average return (\emph{right}) for 20 episodes over 200k environment steps. For $\beta < 0$, we observe that the value estimates and the gradient norm are almost zero, leading to unstable learning. For $\beta > 0$, the value estimates and gradients are orders of magnitude larger than DQN's estimates. These estimates are propagated by the gradients, resulting in exploding gradients. 
        }\label{fig:unstable}
\end{figure*}

Motivated by theoretical results established in Sec.~\ref{sec:theoretical}, we now derive a risk-sensitive off-policy actor-critic algorithm that avoids sampling the entire trajectory. We begin by illustrating the issues that arise when learning a critic network that approximates the exponential value function, using the squared exponential temporal-difference (TD). We then address these problems by introducing a critic representation that avoids explicit computation of the exponential value function, and design a stabilizing mechanism for the critic gradient, based on batch normalization and clipping. Finally, we combine our improvements 
to propose: risk-sensitive exponential actor-critic (rsEAC), a practical algorithm that considerably improves the stability of the optimization of the entropic risk measure.

\subsection{Numerical Issues of Exponential TD Learning}
As discussed in Sec.~\ref{sec:entropic_risk}, sample-based estimates of $Q_{\mu_\theta}^\beta$ are biased due to nonlinear Bellman equations. To obtain unbiased estimates, we must consider the exponential value functions:
\begin{gather}\label{eq:exponential_functions}
Z_{\mu_\theta}^\beta(s_t) = e^{\beta V_{\mu_\theta}(s_t)}, \quad Z_{\mu_\theta}^\beta(s_t, a_t) = e^{\beta Q_{\mu_\theta}(s_t,a_t)}.
\end{gather}
These functions are related by the exponential Bellman equations, which are obtained by applying an exponential transformation to both sides of \EQN\eqref{eq:bellman_Q}:
\begin{gather}\label{eq:soft-bellmanV}
 Z_{\mu_\theta}^\beta(s_t) = \mathbb{E}_{\mu_\theta(a_t|s_t)} [Z_{\mu_\theta}^\beta(s_t,a_t)], \quad  \\ Z_{\mu_\theta}^\beta(s_t,a_t) = e^{\beta r_t}  \mathbb{E}_{p(s_{t+1} \mid s_t, a_t)} \left[Z_{\mu_\theta}^\beta(s_{t+1})\right].
\end{gather}
From these equations, $Z_{\mu_\theta}^\beta$ can be estimated using only samples, which can then be transformed to recover $Q_{\mu_\theta}^\beta$ \citep{fei2021exponential}. However, it is impractical to learn different exponential value functions for each state and action pair, especially in environments with large state/action spaces. Instead, these functions can be approximated with
a parameterized critic network $Z_\psi(s_t,a_t)$, which is trained using stochastic gradient descent to minimize the squared exponential TD error: 
\begin{equation}\label{Z_loss}
\mathop{\mathbb{E}}_{(s_t, a_t, r_t, s_{t+1}) \sim \mathcal{D}} \left[\left(Z_\psi(s_t,a_t) - e^{\beta r_t} Z_{\psi'}(s_{t+1})\right)^2\right],
\end{equation}
where $\Dcal$ is an experience replay buffer that stores previously seen interactions with the environment. The value function $Z_{\psi'}(s_{t+1})$ is implicitly parameterized with a target critic network as $\max_{a_{t+1}}Z_{\psi'}(s_{t+1}, a_{t+1})$. Risk-sensitive actor critic (R-AC) \citep{noorani2022risk, noorani2023exponential} considers a similar loss in the context of online learning for its critic. However, two major drawbacks persist in the learning the R-AC critic network: first, the value function estimates the exponentiated reward, $\mathbb{E}[e^{\beta \sum_t r_t}]$, which can lead to numerical overflow / underflow of the network weights.  Second, directly minimizing \EQN\eqref{Z_loss} may cause exploding (or vanishing) gradients, resulting in divergent behavior. \FIG \ref{fig:unstable}, demonstrates these instabilities by learning value functions in the CartPole task \citep{barto1983neuronlike} in Gymnasium \citep{towers_gymnasium_2023}.



\begin{figure*}[t]
    \centering
     \begin{subfigure}[b]{0.26\textwidth}
         \includegraphics[width=\textwidth]{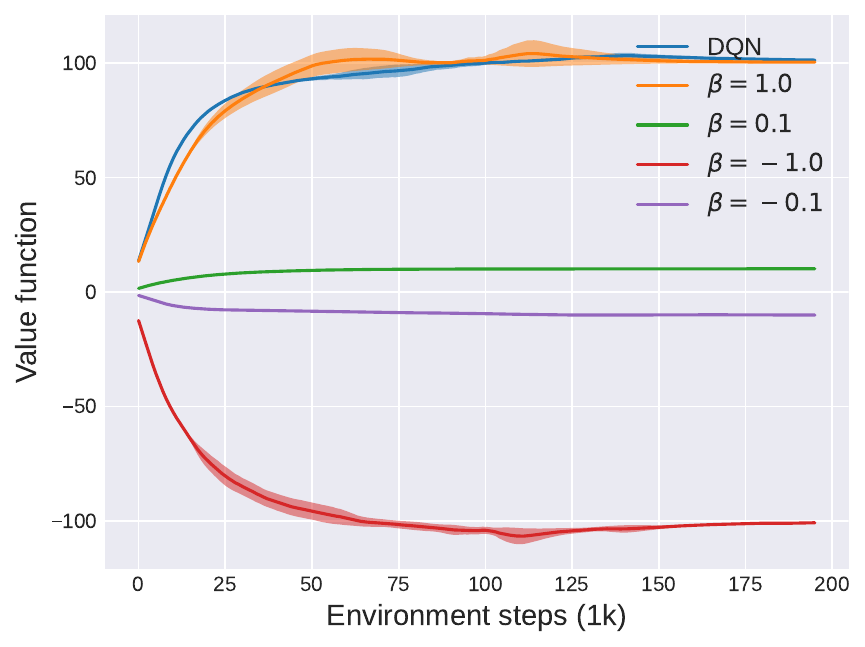}
         \caption{Average Initial Value}
         \label{fig:stable_value}
     \end{subfigure} 
    \hspace{-10px}
     \begin{subfigure}[b]{0.26\textwidth}
         \includegraphics[width=\textwidth]{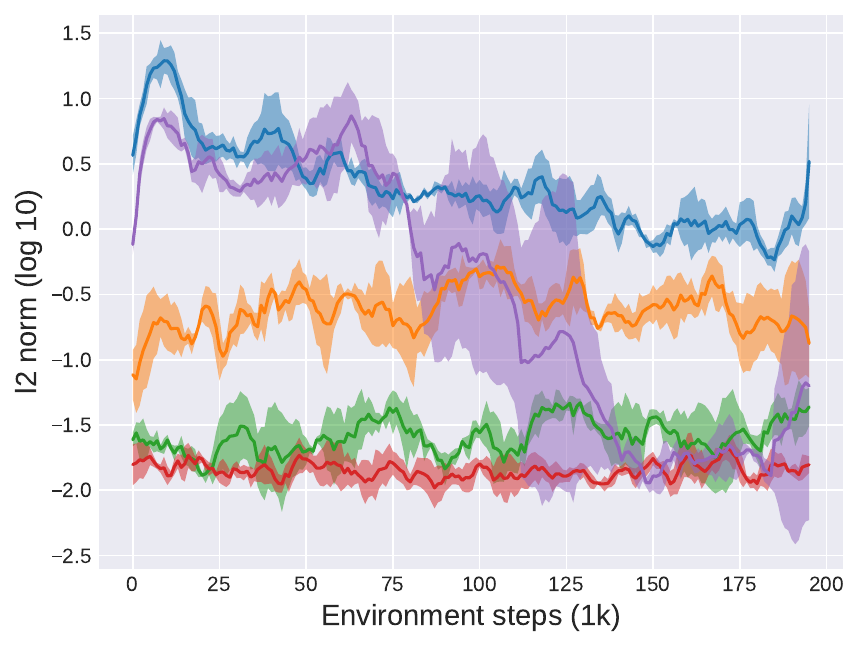}
         \caption{L2 Gradient Norm}
         \label{fig:stable_gradient}
     \end{subfigure}
        \hspace{-10px}
     \begin{subfigure}[b]{0.26\textwidth}
         \includegraphics[width=\textwidth]{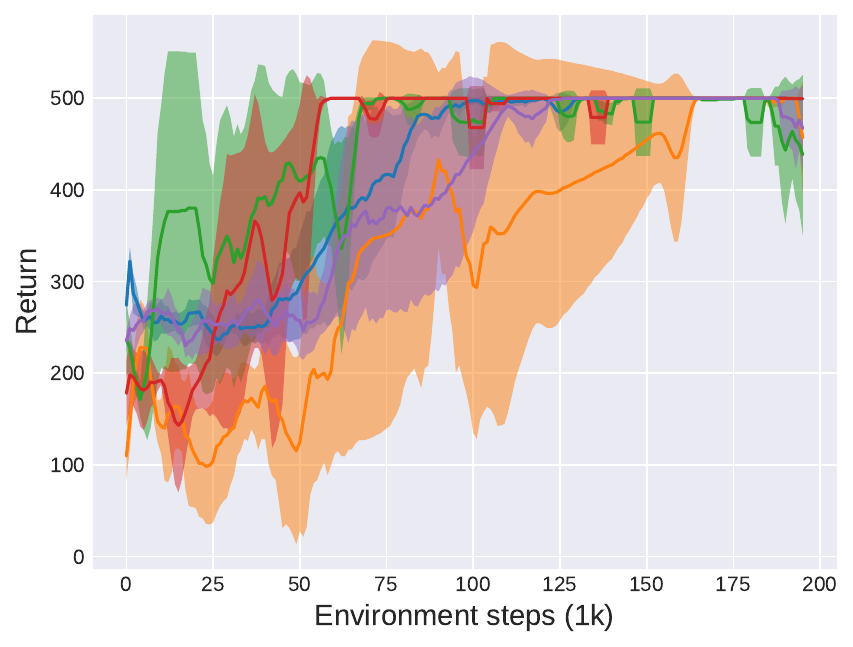}
         \caption{Average Performance}
         \label{fig:stable_performance}
     \end{subfigure}
     \caption{\textbf{Stable value functions and gradients.} We train value functions for DQN and $Q_\psi(s_t,a_t)$ using the normalized clipped gradient for a range of $\beta$ settings ($-1, -0.1, 0.1, 1$). We plot the average value estimates (\emph{left}), the L2 gradient norm in log-domain (\emph{center}), and the average return (\emph{right}) for 20 episodes over 200k environment steps. }\label{fig:stable}
\end{figure*}

\subsection{Stabilizing Updates}\label{subs:instabilities}
We now consider a critic representation that avoids numerical instabilities by parameterizing the exponential value function as $Z_\psi(s,a) = e^{Q_\psi(s,a)}$ where $Q_\psi(s,a)$ is a neural network. In particular, $\frac{1}{\beta}Q_\psi(s,a)$ serves as our approximation of the soft-value function $Q_{\mu_\theta}^\beta$. This reparameterization is more numerically stable as gradients $\nabla Q_\psi$ are taken in the log-domain.  We train this function by minimizing the squared exponential TD error $J_Q(\psi)$:
\begin{equation}\label{eq:reparam}
\!\!\!\mathbb{E}_{(s_t, a_t, r_t, s_{t+1}) \sim \mathcal{D}}\left[\left(e^{Q_\psi(s_t,a_t)} - e^{\beta r_t+Q_{\psi'}(s_{t+1})}\right)^2 \right].
\end{equation}
where $\mathcal{D}$ is the experience replay buffer. We minimize \EQN\eqref{eq:reparam} using stochastic gradient descent. The value function $Q_{\psi'}(s_{t+1})$ is also implicitly parameterized with a target critic network as $\max_{a_{t+1}} Q_{\psi'}(s_{t+1},a_{t+1})$.  The gradient $\nabla J_Q(\psi)$ is then:
\begin{equation}
 \!\!\!e^{Q_\psi(s_t,a_t)}\!\left(e^{Q_\psi(s_t,a_t)} \!\!- e^{\beta r_t +Q_{\psi'} (s_{t+1})}\right) \nabla Q_\psi(s_t,a_t).
\end{equation}
Note that the trailing gradient $\nabla Q_\psi$ is taken in log-domain, and so is numerically stable.  Ignoring this term the leading factors are of the form: $e^x(e^x - e^y)$, which can be numerically unstable.  First, we rearrange factors so that the numerical instability occurs only in the leading term.  To do this we factor out $e^x$ or $e^y$, whichever is larger.  We define the helper function,
\begin{equation}
 f(x,y) \triangleq 
    \begin{cases}
        (1-e^{y-x}), & \text{if } x\geq y\\
        (e^{x-y}-1), & \text{otherwise.}
    \end{cases}
\end{equation}
Then we have that \mbox{$ e^x(e^x - e^y) = e^{x + \max(x, y)} f(x,y)$}.  Observe that the trailing term $f(x,y)$ is numerically stable as it is bounded: $f(x,y)\in[-1,1]$.  Numerical instability occurs only in the leading term $e^{x+\max(x,y)}$, which must be stabilized by subtracting a constant $z$ in log-domain, which is equivalent to normalizing \mbox{$e^x(e^x - e^y) \propto e^{x + \max(x, y)-z} f(x,y)$}. We defer the details of our normalizing choice $z$ and our clipping mechanism to Appendix B.

\textbf{Evaluation} We now show that using the normalized clipped gradient addresses the numerical issues of $Z_\psi(s_t,a_t)$ and can stabilize the learning of the exponential-TD objective. Again we consider the CartPole environment, and train the value functions $Q_\psi(s_t,a_t)$ for the same set of $\beta$ values as in \FIG\ref{fig:unstable}. In \FIG\ref{fig:stable_value}, we observe that the value function $Q_\psi(s_t, a_t)$ is less prone to suffer from numerical overflow compared to $Z_\psi(s_t, a_t)$. In fact, the magnitude learned by DQN \citep{mnih2015human} is close to $\frac{1}{\beta} Q_\psi(s_t, a_t)$, which is not surprising given that CartPole is a deterministic task and our approach scales the rewards by $\beta$. Using the normalized clipped gradient, our approach also guarantees gradients with more manageable magnitudes compared to directly optimizing the exponential-TD objective (\FIG\ref{fig:stable_gradient}). Finally, our algorithm is capable of learning optimal policies for the all the tested $\beta$ initializations, which was only possible for one setting when learning $Z_\psi(s_t, a_t)$ (See \FIG\ref{fig:stable_performance}).

\subsection{Actor-Critic Optimization}

We now present risk-sensitive exponential actor-critic (rsEAC), a practical off-policy algorithm that optimizes the entropic risk-measure. We develop an actor-critic algorithm that updates its policy in the direction of the off-policy deterministic policy gradient in \EQN\eqref{eq:risk_off_gradient}, and substitutes the soft-value function $Q_{\mu_\theta}^\beta$ with our approximation $Q_\psi$. We build on the twin delayed deep deterministic policy gradient algorithm (TD3) \citep{fujimoto2018addressing} for its mechanisms on over-estimation bias reduction and its empirical success in continuous control tasks. Similarly to TD3, we maintain a pair of critics with parameters $\psi_1$ and $\psi_2$, a single actor with parameter $\theta$, and their respective target networks $\psi_1'$, $\psi_2'$ and $\theta'$. For each timestep, we update the pair of critic networks by replacing the target in the normalized clipped gradient derived in Section \ref{subs:instabilities} with 
\begin{equation}
\!\!\!y_t \!= 
    \begin{cases}
       \beta r_t + \min_{i=1,2} \gamma Q_{\psi_i'}(s_{t+1}, a_{t+1}),& \text{if } \beta > 0\\
         \beta r_t + \max_{i=1,2} \gamma Q_{\psi_i'}(s_{t+1}, a_{t+1}), & \text{if } \beta < 0.
    \end{cases}
\end{equation}
where $a_{t+1} \!= \!\mu_{\theta'}(s_{t+1}) \!+ \!\epsilon$ and $\epsilon \sim \text{clip}(\mathcal{N}(0, \sigma^2), -c^*,c^*)$. The actor is a deterministic policy parameterized with a neural network $\mu_\theta(s_t)$ and is optimized w.r.t.~the risk-sensitive off-policy gradient in \EQN\eqref{eq:risk_off_gradient}, for which we substitute the soft-value function with our estimate $\frac{1}{\beta}Q_{\psi_1}(s_t,a_t)$:
\begin{equation}
\mathbb{E}_{s_t \sim \mathcal{D}}\left[ \frac{1}{\beta}  \nabla_{\theta} \mu_\theta(s_t) \nabla_{a_t} Q_{\psi_1}(s_t, a_t) \mid_{a_t = \mu_\theta(s_t)} \right].
\end{equation}
where $s_t$ is sampled from the buffer $\mathcal{D}$. Given that the policy $\mu_\theta(s_t)$ is deterministic, we ensure adequate exploration by adding Gaussian noise $\mathcal{N}(0,\sigma^2)$. We update the target networks using an exponentially moving average of the weights. Pseudocode for rsEAC can be found in Appendix D.  

\section{Related Work}
The entropic risk measure was first investigated in \citep{howard1972risk} as an approach to incorporate risk sensitivity in MDPs. Following this work, risk-sensitive MDPs --- where the agent knows the transition dynamics or has access to a simulator of the environment --- have been widely studied \citep{fleming1995risk, hernandez1996risk, coraluppi1999risk, di1999risk, borkar2002risk, huang2020stochastic}. More recently, risk-sensitive RL has been connected to the RL-as-inference framework \citep{noorani2022probabilistic}, in which policy search is formulated as maximum likelihood estimation on an augmented MDP \citep{todorov2008general, kappen2012optimal,levine2018reinforcement}. Another formulation of risk-sensitive RL with probabilistic inference is that of policy search as an expectation maximization (EM) style algorithm~\citep{neumann2011variational, abdolmaleki2018maximum, ijcai2021p316, Granados2025model}. 


Risk-sensitive model-free algorithms optimize the exponential criteria as an equivalent objective of the entropic risk measure \citep{bauerle2014more, borkar2001sensitivity, borkar2002q}. These approaches employ exponential value functions --- functions that are associated in a multiplicative way, rather than in an additive way as in standard RL \citep{fei2021exponential} --- that can be estimated using Monte Carlo methods. However, the exponential criteria suffer from numerical instabilities that arise from the estimation of the expected exponentiated return. These numerical instabilities are further exacerbated when introducing function approximators \citep{maei2009convergent}. Risk-sensitive policy gradient algorithms \citep{nass2019entropic, noorani2021risk} may suffer from these same instabilities. 

Network saturation for the exponential function is a well-known issue in neural network design \citep{goodfellow2016deep}. While log-likelihood objectives can prevent this problem by undoing the exponential operation, this approach is incompatible with exponential TD learning, which must use the squared loss. Another popular approach for training models with saturating nonlinearities is batch normalization which normalizes each layer of a deep neural network \citep{ioffe2015batch}. Motivated by this approach, we design a batch normalization scheme for the last layer to avoid the saturation of the gradient, and introduce gradient clipping on the exponent, which helps with the exploding/vanishing gradient problem \citep{pascanu2013difficulty}. Our method is, to our knowledge, the first actor-critic algorithm that incorporates mechanisms to stabilize the optimization of the exponential TD learning, and our policy gradient theorems establish a rigorous framework for risk-sensitive actor-critic learning with the entropic risk measure.
\section{Experiments}

\begin{figure*}[t]
    \centering
     \begin{subfigure}[b]{0.225\textwidth}
         \includegraphics[width=\textwidth]{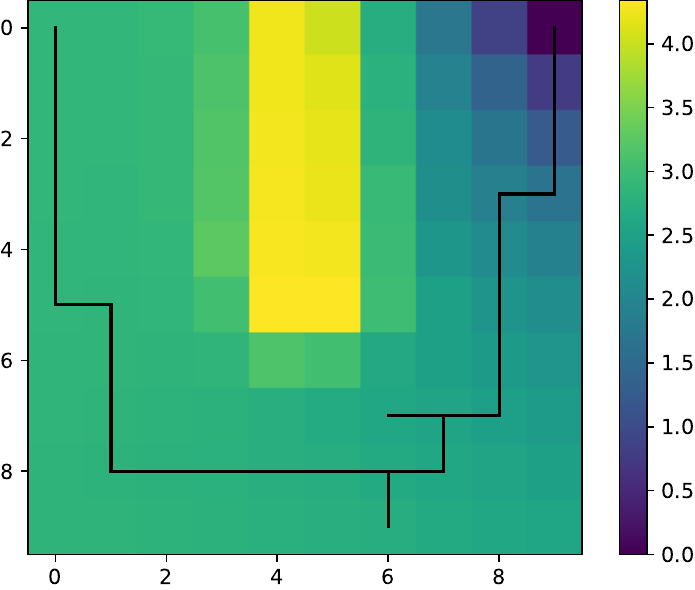}
         \caption{Risk-averse ($\beta = -1$)}
         \label{fig:-1}
     \end{subfigure}
     \begin{subfigure}[b]{0.23\textwidth}
         \includegraphics[width=\textwidth]{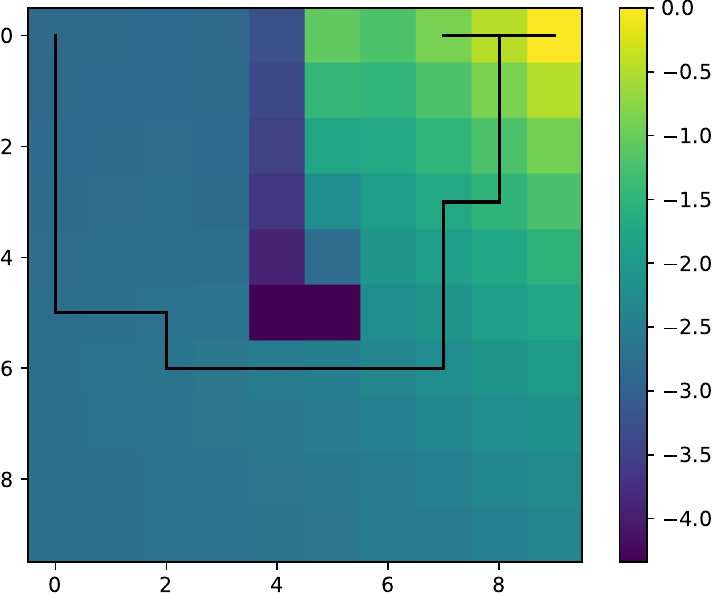}
         \caption{Risk-Seeking ($\beta = 1$)}
         \label{fig:1}
     \end{subfigure} 
     \begin{subfigure}[b]{0.235\textwidth}
         \includegraphics[width=\textwidth]{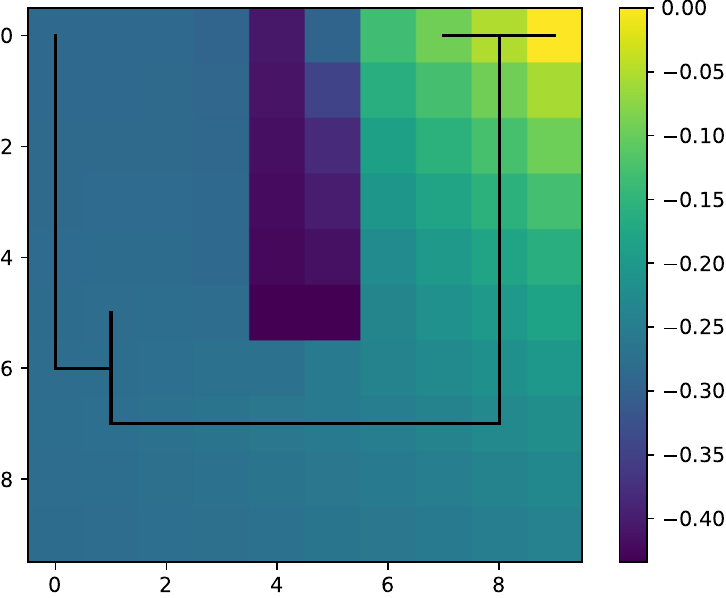}
         \caption{Risk-neutral ($\beta = 0.1$)}
         \label{fig:0.1}
     \end{subfigure} 
    \caption{\textbf{Tabular Risk-Sensitive Policies.}  Optimal value functions $Z^*(s_t)$ (in log-domain) and corresponding trajectory for different $\beta$ values. Agents trained with negative $\beta$ values learn risk-averse policies that avoid the cliff-region, while agents trained with positive $\beta$ values tend to be risk-seeking and hug the cliff closely. We also recover risk-neutral policies when the magnitude of $|\beta|$ is small.}\label{fig:gridworld_risky}
    \vspace{-5mm}
\end{figure*}

\begin{figure}
    \centering
     \begin{subfigure}[b]{0.16\textwidth}
         \includegraphics[width=\textwidth]{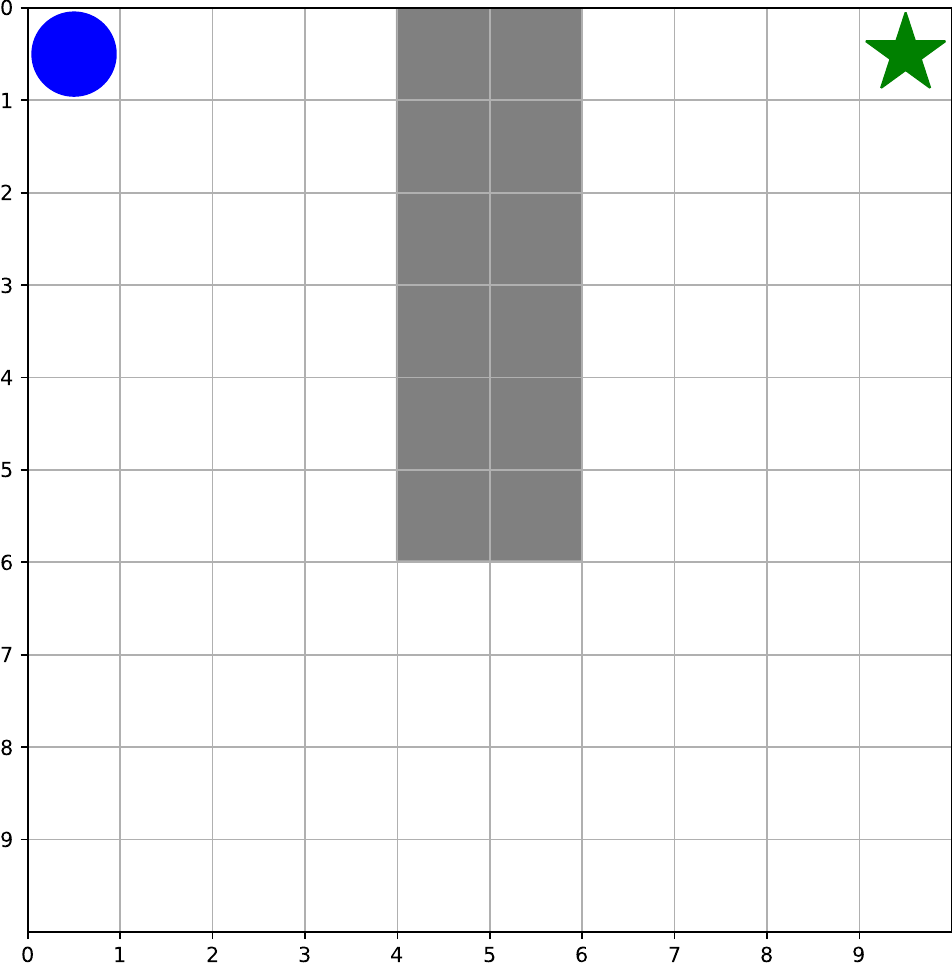}
     \end{subfigure}
     \begin{subfigure}[b]{0.19\textwidth}
         \includegraphics[width=\textwidth]{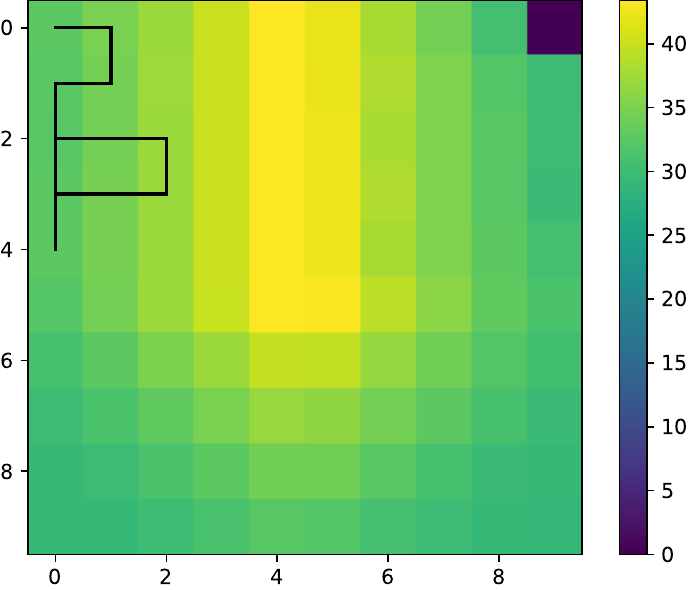}
     \end{subfigure} 
    \caption{\textbf{Stochastic Cliff GridWorld.} \emph{Left}: 2D grid environment with initial-state and goal-state given by blue circle and green star, respectively. The cliff region is given by gray states. \emph{Right}: We plot the learned optimal value functions (in log-domain) for $\beta = -10$ and a sampled trajectory. The estimated values explode in magnitude, resulting in a policy incapable of reaching the destination. }\label{fig:gridworld}
\end{figure}

The goals of our experiments are to: verify that numerical instabilities are a direct result of optimizing the exponential Bellman equations, study the risk preference and robustness of rsEAC to different values of $\beta$, and demonstrate that our approach can produce competitive risk-sensitive policies in comparison to other risk-sensitive methods.  We begin by considering a tabular environment where we can study the objective without introducing additional approximation error. We then demonstrate that rsEAC can learn stable policies while varying the risk parameter $\beta$.  Finally, we compare rsEAC to risk-averse baselines in three challenging risky variations of MuJoCo tasks, where risk-aware agents should avoid a noisy region.

\subsection{GridWorld Environment}


Our aim with this experiment is to illustrate the role that $\beta$ plays in modulating risk-sensitivity and identify potential numerical instabilities that may appear when optimizing the exponential Bellman equations in the tabular setting. We examine the 2D GridWorld described in~\citep{eysenbach2022mismatched}. In this environment, the agent starts in an initial position and its objective is to travel to a given destination. When the agent selects a direction (up, left, down and right), it may possibly move to its chosen direction or to a random neighboring state. We introduce aleatoric risk by defining a cliff-region that the agent should avoid. Falling into the cliff incurs a large negative reward and terminates the episode. 

The GridWorld environment is composed of a 2D grid with 10 × 10 states (see \FIG\ref{fig:gridworld}). The initial state and destination are depicted by a blue circle and green star, respectively. The cliff-region is depicted by a grey rectangle. The probability of moving to a random neighboring state is 0.2. Each move has a $-1$ cost and the penalty for falling into the cliff is $-10$. We learn the value functions $Z^*(s_t,a_t)$ using the exponential Bellman equations, with an $\epsilon$-greedy policy ($\epsilon = 0.1$) and a discount factor of $\gamma = 0.85$. We train this algorithm by running it for 50000 episodes.

In \FIG\ref{fig:gridworld_risky}, we plot learned optimal value functions $Z^*(s_t)$ (in log-domain) and a sampled trajectory using a learned optimal policy $\pi^*$ for a range of $\beta$ values. We observe that policies trained with $\beta < 0$ prefer longer paths that avoid the cliff-region (\FIG\ref{fig:-1}), while policies trained with $\beta > 0$ choose the shortest path which travels next to the cliff (\FIG\ref{fig:1}). As expected, we recover risk-neutral policies when the magnitude of $|\beta|$ is small (\FIG\ref{fig:0.1}). The learned value functions also demonstrate exploding/vanishing gradient magnitudes when estimated using $|\beta|$ values with larger magnitudes (\FIG\ref{fig:gridworld}). We expect these numerical instabilities to develop even more often when introducing function approximators.


\begin{figure}
    \centering
    \hspace{-15px}
     \begin{subfigure}[b]{0.24\textwidth}
         \includegraphics[width=\textwidth]{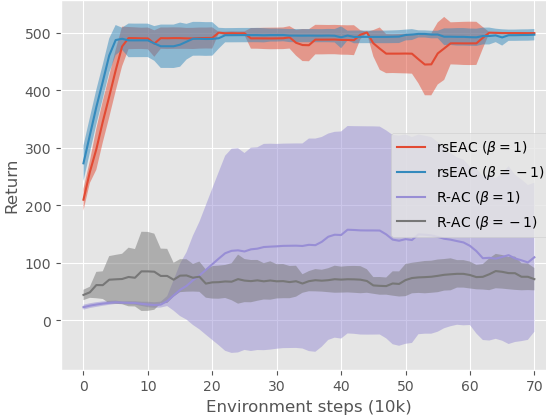}
         \label{fig:Inverted_perf}
     \end{subfigure} 
     \hspace{-5px}
     \begin{subfigure}[b]{0.24\textwidth}
         \includegraphics[width=\textwidth]{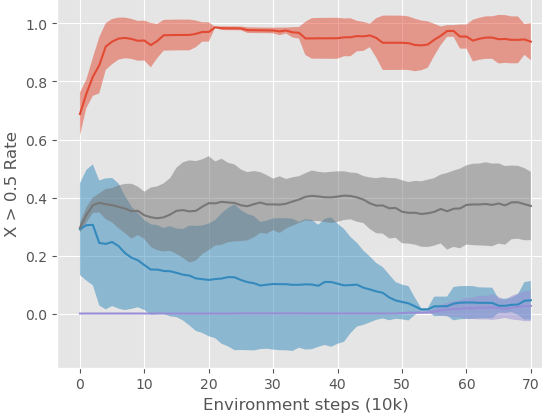}
         \label{fig:Inverted_perc}
     \end{subfigure}
     \caption{\textbf{Risk-sensitive policies in Inverted Pendulum.} We train rsEAC and R-AC for $\beta =1 $ and $\beta = -1$. \emph{Left}: We plot the average return. \emph{Right}: Percentage of steps on an episode in risky regions. The solid curves correspond to the mean and shaded regions to $\pm$ one standard deviation over 5 random seeds. rsEAC learns, both, risk-seeking and risk-averse policies that achieve high return, while R-AC encounters numerical instabilities and learns poor policies.}
     \label{fig:ablation}
\end{figure}

\begin{figure*}
    \centering
     \begin{subfigure}[b]{0.26\textwidth}
         \includegraphics[width=\textwidth]{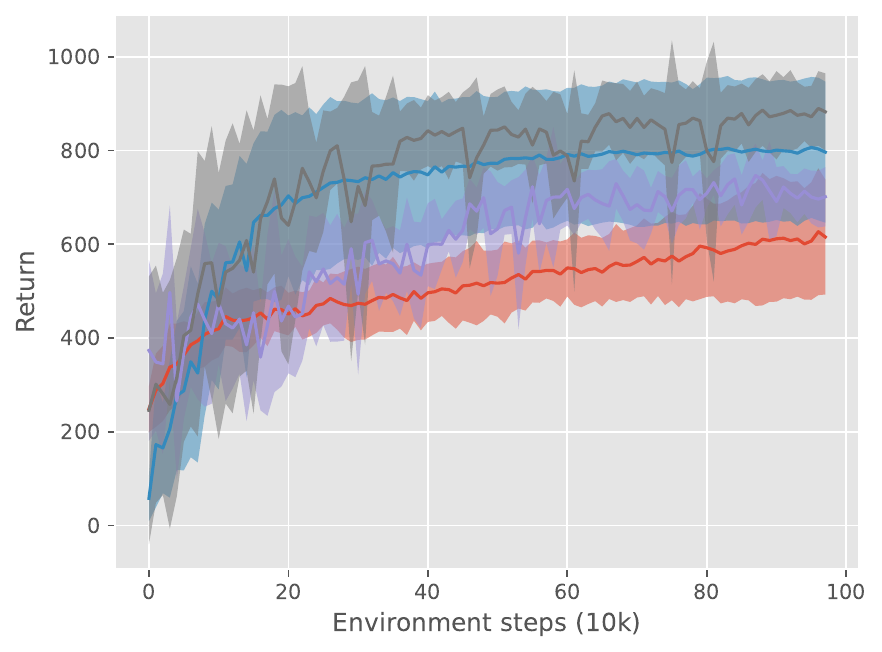}
         \caption{Swimmer-v4}
         \label{fig:Swim_perf}
     \end{subfigure} 
     \begin{subfigure}[b]{0.26\textwidth}
         \includegraphics[width=\textwidth]{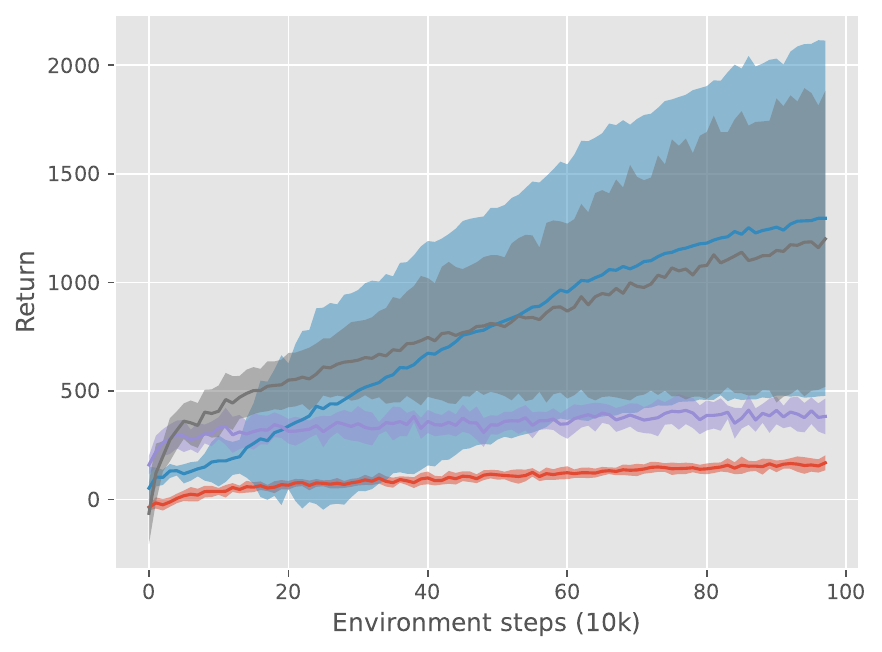}
         \caption{HalfCheetah-v4}
         \label{fig:Half_perf}
     \end{subfigure}
     \begin{subfigure}[b]{0.26\textwidth}
         \includegraphics[width=\textwidth]{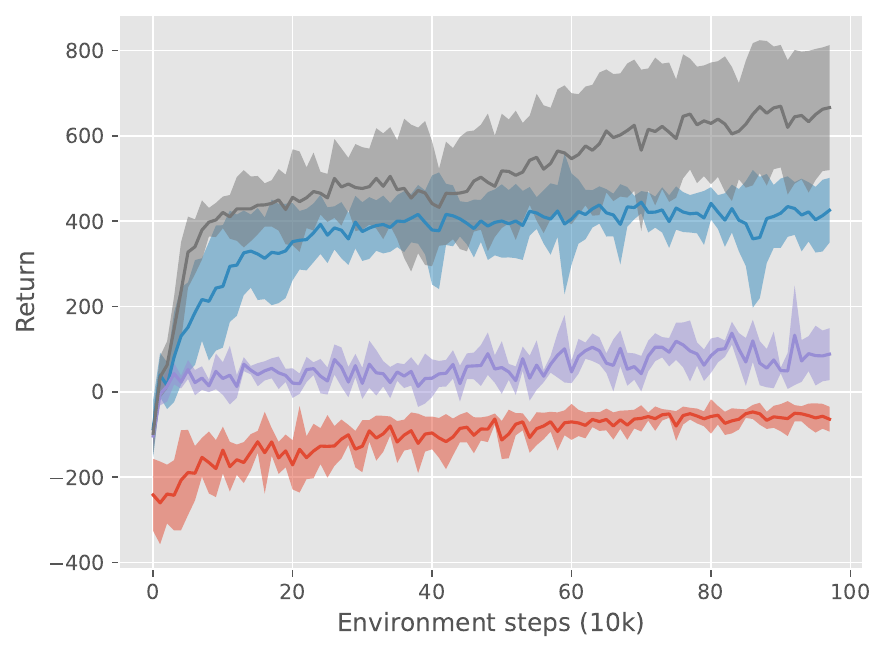}
         \caption{Ant-v4}
         \label{fig:Ant_perf}
     \end{subfigure}
    \centering
     \begin{subfigure}[b]{0.26\textwidth}
         \includegraphics[width=\textwidth]{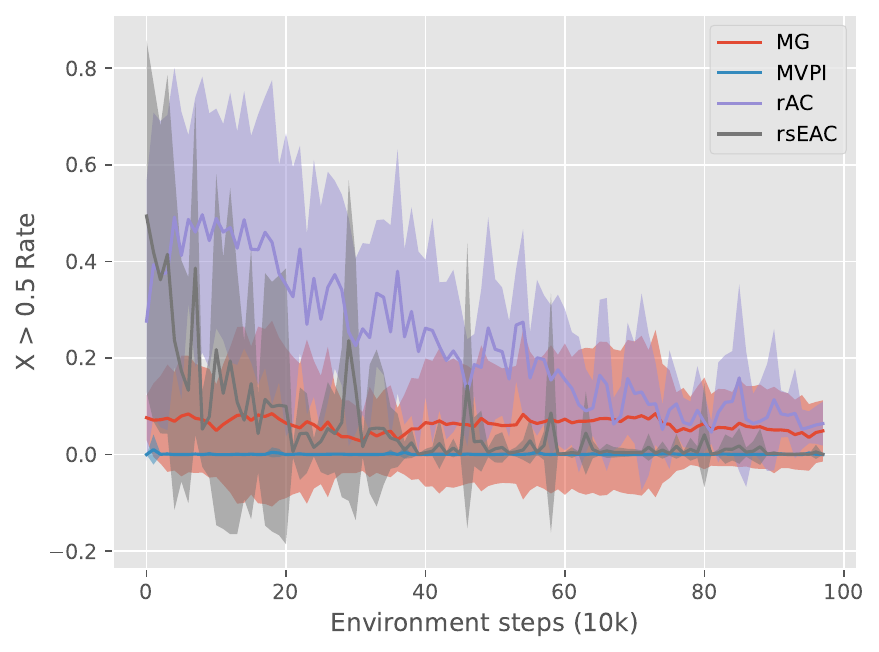}
         \label{fig:Swim_perc}
     \end{subfigure} 
     \begin{subfigure}[b]{0.26\textwidth}
         \includegraphics[width=\textwidth]{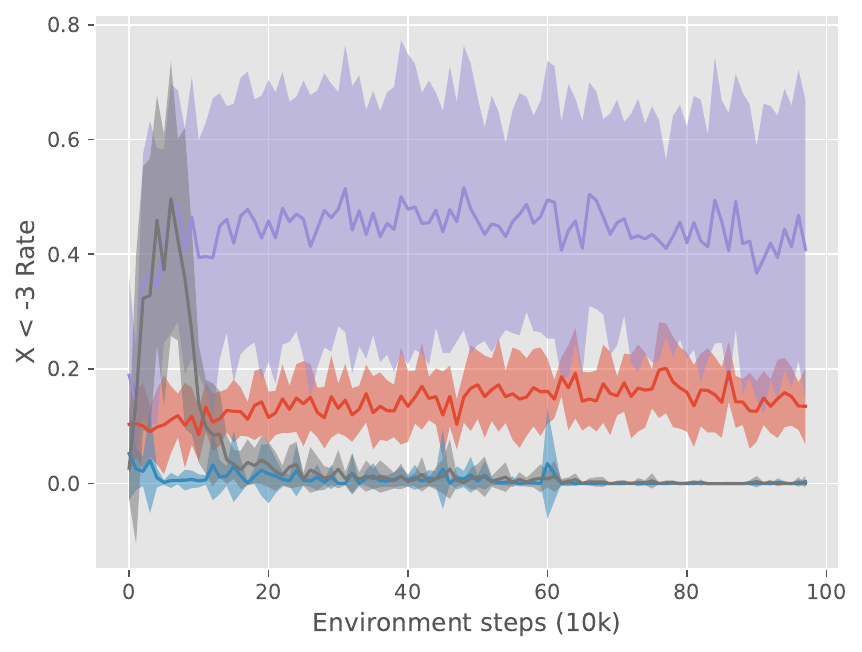}
         \label{fig:Half_perc}
     \end{subfigure}
     \begin{subfigure}[b]{0.26\textwidth}
         \includegraphics[width=\textwidth]{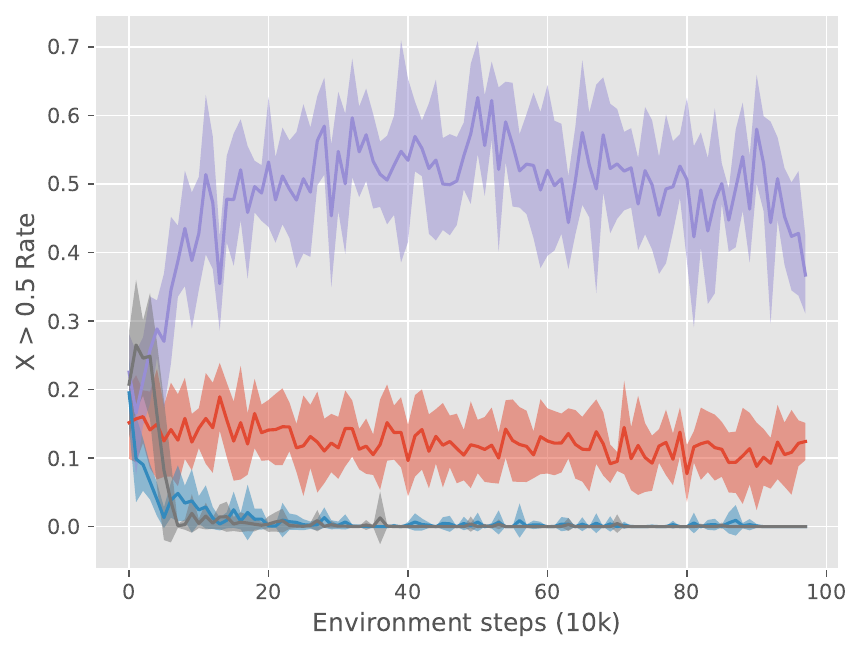}
         \label{fig:Ant_perc}
     \end{subfigure}
     \caption{\textbf{Risk-Averse MuJoCo.} \emph{Top row}: Average return on MuJoCo benchmarks with risky regions. \emph{Bottom row}: Percentage of steps on an episode in risky regions. The solid curves correspond to the mean and shaded regions to $\pm$ one standard deviation over 10 random seeds. 
        }\label{fig:mujoco_reward}
\end{figure*}

\subsection{Inverted Pendulum Environment}
We study the role of $\beta$ in modulating risk in rsEAC and that stable policies can be learned for different risk profiles. We consider a risky variant of Inverted Pendulum --- a continuous version of the CartPole environment --- from MuJoCo \citep{todorov2012mujoco}. Following \citep{luo2023alternative}, we induce aleatoric risk by adding stochastic noise ($\mathcal{N}(0,1)$) to the reward function when the agent's X-position is greater than 0.01.  For our evaluations, we consider R-AC \citep{noorani2023exponential} and our method using two different $\beta$ values corresponding to risk-seeking ($\beta = 1$) and risk-averse ($\beta = -1$) settings. In \FIG\ref{fig:ablation}, we plot the average return performance and the risky region visiting rate. We report the average and standard deviation for 5 independents runs. We verify that rsEAC can learn risk-seeking and risk-averse policies --- our agent prefers the risky region when trained with $\beta = 1$, and avoids it when trained with $\beta = -1$ --- while retaining high performance in terms of average return. In comparison, R-AC suffers from numerical instabilities that lead to low-return policies and a lack of risk-sensitivity.

\subsection{Simulated Robotic Benchmarks}

We evaluate rsEAC on three modified MuJoCo \citep{todorov2012mujoco} continuous control tasks: Swimmer, HalfCheetah, and Ant. Following \citep{luo2023alternative}, we modify their reward functions to make the speed positive in both directions so the agent is free to move left or right. We also construct risky regions in these tasks based on the X-position. We add a stochastic reward sampled from $\mathcal{N}(0,10^2)$ for Swimmer and HalfCheetah, when the agent's position is greater than 0.5 and smaller than -3, respectively. For the Ant task, we add a stochastic reward sampled from $\mathcal{N}(0,7^2)$ when its position is greater than 0.5. For all tasks, we append the X-position into the agent's observation. Hence, a risk-averse agent would do its best effort to avoid these risky regions. 

We compare our algorithm against three risk-averse baselines: Mean Gini deviation (MG) \citep{luo2023alternative}, a policy gradient algorithm that considers Gini deviation as its risk measure and outperforms other return variance methods; mean-variance policy iteration (MVPI) \citep{zhang2021mean}, a flexible algorithm that optimizes reward-volatility risk measure and demonstrates great empirical success in risk-averse continuous domains; and risk-sensitive actor critic (R-AC) \citep{noorani2023exponential}, an online actor-critic algorithm that optimizes the entropic risk measure. To make the comparison fair between different methods, we implement every actor-critic algorithm on top of TD3 \citep{fujimoto2018addressing}, and use the same network architecture across all algorithms. For MG, we use PPO \citep{schulman2017proximal} to learn its policy gradient, as suggested by the author's implementation. See Appendix C for additional details.

We report the average return (\emph{top-row}) and the risky region visiting rate (\emph{bottom-row}) for the three risk-averse MuJoCo environments (See \FIG\ref{fig:mujoco_reward}). The results show that rsEAC produces policies that are risk-averse (low rate in risky regions), and perform comparably to other baselines in terms of average return. In particular, it outperforms R-AC in every task, showing that our stabilizing mechanisms play an essential role in the learning of risk-averse and high-return policies. Our approach performs comparably to MVPI in terms of final performance across environments, while outperforming it in the higher-dimensional Ant task.

\section{Conclusion}
In this paper, we study risk-sensitive reinforcement learning using the entropic risk-measure objective. We establish that existing policy gradient approaches learn high-variance and numerically unstable estimates that may lead to divergent behavior by the agent. To resolve these issues, we derive policy gradient theorems that circumvent the need to estimate a gradient for the full trajectory, and propose a critic parameterization for the exponential value function that avoids its explicit computation. We optimize this critic using stochastic gradient descent, for which we too propose a stabilizing mechanism based on mini-batch normalization and clipping. Taking these components together, we then propose risk-sensitive exponential actor-critic (rsEAC), a practical algorithm that greatly improves the stability of the optimization of the entropic risk measure. Our evaluations demonstrate that rsEAC can learn risk-sensitive and high-return policies in complex continuous tasks.

\textbf{Limitations} Our approach keeps two main limitations. There is an inherent instability when working with the exponential function, and even our solutions may not be enough to stabilize updates in all settings. Second, a key component for our method is the risk parameter $\beta$ which must be tuned to balance the amount of risk-sensitivy for different tasks.

\bibliography{refs.bib}
\newpage
\appendix
\section{Proofs of Theoretical Results}\label{sec:proofs}
\subsection{Proof of Risk-Sensitive Stochastic Policy Gradient Theorem}
This proof considers similar regularity assumptions as in \cite{silver2014deterministic}:

\paragraph{Assumption 1:} $\pi_\theta(a|s)$, $\nabla_\theta \pi_\theta(a|s)$, $p(s'|s, a)$, $p_1(s)$ are continuous in $s$, $a$, $s'$ and $\theta$.

\paragraph{Assumption 2:} there exists a $b$ such that $\sup_s p_1(s) < b$, $\sup_{s,a,s'} p(s'|s, a) < b$, and $\sup_{s,a} r(s, a) < b$.

\paragraph{Assumption 3:} $\mathcal{A}$ is a compact subset of $\mathbb{R}^m$ and $\mathcal{S}$ is a compact subset of $\mathbb{R}^d$.

In particular, assumption 1 implies that $V_{\pi_\theta}^\beta(s)$, $\nabla_\theta V_{\pi_\theta}^\beta(s)$, $Q_{\pi_\theta}^\beta(s,a)$ and $\nabla_\theta Q_{\pi_\theta}^\beta(s,a)$ are continuous functions of $\theta$, $s$, and $a$; and the compactness of $\mathcal{S}$ and $\mathcal{A}$ further implies that for any $\theta$, $||\nabla_\theta V_{\pi_\theta}^\beta(s)||$, and $||\nabla_\theta \pi_\theta(a|s)||$ are bounded functions of $s$ and $a$. In addition, we define $p_1^*(s)$ as the initial distribution proportional to $p_1(s) e^{\beta V_{\pi_\theta}^\beta(s)}$, the distribution of $s'$ after transitioning for $t$ time steps from state $s$ using dynamics $p^*(s'|s,a)$ and policy $\pi_\theta^*(a|s)$ as $p^*(s\to s', t, \pi^*)$, and $\rho_\pi^*(s') := \int_{\mathcal{S}} \sum_{t=0}^T  p_1^*(s) p^*(s \to s', t, \pi^*) \text{d}s$. Now we present our result,

\renewcommand\thetheorem{3.1}
\begin{theorem}
(Risk-Sensitive Stochastic Policy Gradient Theorem). The gradient w.r.t. $J^\beta(\pi_\theta)$ is given by,
\begin{gather*}
    \nabla_\theta J^\beta(\pi_\theta) = \frac{1}{\beta} \int_\mathcal{S} \rho_{\pi}^*(s) \int_\mathcal{A} \nabla_\theta \pi_\theta (a|s)\frac{e^{\beta Q_{\pi_\theta}^\beta(s,a)}}{e^{\beta V_{\pi_\theta}^\beta(s)}} \,\mathrm{d}a \,\mathrm{d}s.
\end{gather*}
\end{theorem}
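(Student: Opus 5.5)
We will follow the template of the proof of the policy gradient theorem of \citet{sutton1999policy} and \citet{silver2014deterministic}, but work with the exponential value functions $Z^\beta_{\pi_\theta}(s)=e^{\beta V^\beta_{\pi_\theta}(s)}$ and $Z^\beta_{\pi_\theta}(s,a)=e^{\beta Q^\beta_{\pi_\theta}(s,a)}$. In these variables the Bellman equations are linear:
\[
Z(s)=\int_{\mathcal{A}}\pi_\theta(a|s)Z(s,a)\,\mathrm{d}a,\qquad Z(s,a)=e^{\beta r(s,a)}\int_{\mathcal{S}}p(s'|s,a)Z(s')\,\mathrm{d}s'.
\]
We start from $J^\beta(\pi_\theta)=\frac{1}{\beta}\log\int p_1(s)Z(s)\,\mathrm{d}s$, so that
\[
\nabla_\theta J^\beta=\frac{1}{\beta}\,\frac{\int p_1(s)\nabla_\theta Z(s)\,\mathrm{d}s}{\int p_1(s)Z(s)\,\mathrm{d}s}.
\]
The assumptions (continuity, boundedness, compactness) justify exchanging $\nabla_\theta$ and the integrals, exactly as in \citet{silver2014deterministic}. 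The finite horizon means the recursion terminates, so no contraction argument is needed. Formally the value functions are time-indexed, and we suppress that index as the paper does.

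\textbf{Step 1: one-step recursion in ratio form.} Differentiating the Bellman equation gives
\[
\nabla_\theta Z(s)=\int \nabla_\theta\pi_\theta(a|s)Z(s,a)\,\mathrm{d}a+\int\pi_\theta(a|s)e^{\beta r}\int p(s'|s,a)\nabla_\theta Z(s')\,\mathrm{d}s'\,\mathrm{d}a.
\]
The key manipulation is to divide by $Z(s)$ and write $\nabla_\theta Z(s')=Z(s')\,\frac{\nabla_\theta Z(s')}{Z(s')}$. This yields
\[
\frac{\nabla_\theta Z(s)}{Z(s)}=\int\nabla_\theta\pi_\theta(a|s)\frac{Z(s,a)}{Z(s)}\,\mathrm{d}a+\int\!\!\int\frac{\pi_\theta(a|s)Z(s,a)}{Z(s)}\cdot\frac{p(s'|s,a)Z(s')}{\int p(s''|s,a)Z(s'')\,\mathrm{d}s''}\cdot\frac{\nabla_\theta Z(s')}{Z(s')}\,\mathrm{d}s'\,\mathrm{d}a.
\]
This uses $e^{\beta r}=Z(s,a)/\int p\,Z$. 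The two normalized factors are exactly the twisted policy $\pi^*_\theta(a|s)$ and the twisted dynamics $p^*(s'|s,a)$. So $g(s):=\nabla_\theta Z(s)/Z(s)$ satisfies a standard linear recursion under $(p^*,\pi^*)$ with per-step "reward" $h(s)=\int\nabla_\theta\pi_\theta(a|s)\,e^{\beta(Q^\beta-V^\beta)}\,\mathrm{d}a$.

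\textbf{Step 2: unrolling and the initial state.} Unrolling the recursion over the $T$ steps, as in the risk-neutral proof, gives
\[
g(s)=\int\sum_t p^*(s\to s',t,\pi^*)\,h(s')\,\mathrm{d}s'.
\]
The outer ratio equals $\int p_1^*(s)\,g(s)\,\mathrm{d}s$, because $p_1^*(s)=p_1(s)Z(s)/\int p_1Z$. Combining these two facts and applying Fubini produces
\[
\frac{1}{\beta}\int\rho^*_\pi(s')\,h(s')\,\mathrm{d}s',
\]
which is the claim.

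\textbf{Main obstacle.} The delicate step is Step 1: correctly identifying the normalizations so that the twisted kernels are genuine probability distributions. The twisted policy normalizes to $Z(s)$, and the twisted dynamics normalize to $\int p(s''|s,a)Z(s'')\,\mathrm{d}s''=e^{-\beta r}Z(s,a)$. A secondary point is the time-indexing of $Z$ in the finite horizon: the exponents $V^\beta,Q^\beta$ inside $p^*$ and $\pi^*$ are the values with the remaining horizon, and the sum over $t$ (from $0$, as in the definition of $\rho^*_\pi$) must be matched to the horizon. We would handle this by carrying an explicit time index through the induction and dropping it at the end to match the paper's notation.
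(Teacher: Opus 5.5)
Your proposal is correct and is essentially the paper's proof: your $g(s)=\nabla_\theta Z(s)/Z(s)$ is just $\beta\nabla_\theta V^\beta_{\pi_\theta}(s)$, and the paper likewise differentiates the Bellman recursion, recognizes the normalized weights as the twisted policy $\pi^*_\theta$ and twisted dynamics $p^*$, unrolls the recursion, and reweights the initial state by $p_1^*$. Your remark about carrying an explicit time index through the finite-horizon recursion addresses a point the paper suppresses, and it does not change the argument.
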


\begin{proof}
\begin{equation*}
    \begin{aligned}
    \nabla_\theta V_{\pi_\theta}^\beta(s) 
    &= \nabla_\theta \frac{1}{\beta}\log \int_\mathcal{A} \pi_\theta(a|s) e^{\beta Q_{\pi_\theta}^\beta(s, a)} \text{d}a \\
    &= \frac{1}{\beta e^{\beta V_{\pi_\theta}^\beta(s)}} \nabla_\theta \int_\mathcal{A} \pi_\theta(a|s) e^{\beta Q_{\pi_\theta}^\beta(s, a)}\text{d}a \\
    &\overset{(a)}{=} \frac{1}{\beta} \int_\mathcal{A} \nabla_\theta \pi_\theta(a|s) \frac{e^{\beta Q_{\pi_\theta}^\beta(s, a)}}{e^{\beta V_{\pi_\theta}^\beta(s)}} \text{d}a \\ &+ \int_\mathcal{A} \frac{\pi_\theta(a|s) e^{\beta Q_{\pi_\theta}^\beta(s, a)}}{ e^{\beta V_{\pi_\theta}^\beta(s)}} \nabla_\theta Q_{\pi_\theta}^\beta(s, a) \text{d}a \\
    &= \frac{1}{\beta} \int_\mathcal{A} \nabla_\theta \pi_\theta(a|s) \frac{e^{\beta Q_{\pi_\theta}^\beta(s, a)}}{e^{\beta V_{\pi_\theta}^\beta(s)}} \text{d}a \\ &+ \int_\mathcal{A} \frac{\pi_\theta(a|s) e^{\beta Q_{\pi_\theta}^\beta(s, a)}}{e^{\beta V_{\pi_\theta}^\beta(s)}} \\& \nabla_\theta \left[r(s, a) + \frac{1}{\beta}\log \int_{\mathcal{S}} p(s'|s, a) e^{\beta V_{\pi_\theta}^\beta(s')} \text{d}s'\right] \text{d}a \\
    &\overset{(b)}{=} \frac{1}{\beta} \int_\mathcal{A} \nabla_\theta \pi_\theta(a|s) \frac{e^{\beta Q_{\pi_\theta}^\beta(s, a)}}{e^{\beta V_{\pi_\theta}^\beta(s)}} \text{d}a \\
    &+ \int_\mathcal{A} \frac{\pi_\theta(a|s) e^{\beta Q_{\pi_\theta}^\beta(s, a)}}{e^{\beta V_{\pi_\theta}^\beta(s)}} \frac{1}{\int_{\mathcal{S}} p(s'|s, a) e^{\beta V_{\pi_\theta}^\beta(s')} \text{d}s'} \\ &\int_{\mathcal{S}} p(s'|s, a) e^{\beta V_{\pi_\theta}^\beta(s')} \nabla_\theta V_{\pi_\theta}^\beta(s') \text{d}s' \text{d}a 
\end{aligned}
\end{equation*}
\begin{equation*}
    \begin{aligned}
    &= \frac{1}{\beta} \int_\mathcal{A} \nabla_\theta \pi_\theta(a|s) \frac{e^{\beta Q_{\pi_\theta}^\beta(s, a)}}{e^{\beta V_{\pi_\theta}^\beta(s)}} \text{d}a \\ &+ \int_\mathcal{A} \frac{\pi_\theta(a|s) e^{\beta Q_{\pi_\theta}^\beta(s, a)}}{e^{\beta V_{\pi_\theta}^\beta(s)}} \\ &\int_{\mathcal{S}} \frac{p(s'|s, a) e^{\beta V_{\pi_\theta}^\beta(s')}}{\int_{\mathcal{S}} p(s'|s, a) e^{\beta V_{\pi_\theta}^\beta(s')} \text{d}s'} \nabla_\theta V_{\pi_\theta}^\beta(s')  \text{d}s' \text{d}a \\
    &= \frac{1}{\beta}  \int_\mathcal{A} \nabla_\theta \pi_\theta(a|s) \frac{e^{\beta Q_{\pi_\theta}^\beta(s, a)}}{e^{\beta V_{\pi_\theta}^\beta(s)}} \text{d}a \\ &+ \int_\mathcal{A} \int_{\mathcal{S}} \pi^*(a|s) p^*(s'|s, a) \nabla_\theta V_{\pi_\theta}^\beta(s')  \text{d}s' \text{d}a \\
    &=  \frac{1}{\beta} \int_\mathcal{A} \nabla_\theta \pi_\theta(a|s) \frac{e^{\beta Q_{\pi_\theta}^\beta(s, a)}}{e^{\beta V_{\pi_\theta}^\beta(s)}} \text{d}a \\ &+ \int_{\mathcal{S}} p^*(s \to s', 1, \pi^*) \nabla_\theta V_{\pi_\theta}^\beta(s')  \text{d}s' \\
    &= \frac{1}{\beta} \int_\mathcal{A} \nabla_\theta \pi_\theta(a|s) \frac{e^{\beta Q_{\pi_\theta}^\beta(s, a)}}{e^{\beta V_{\pi_\theta}^\beta(s)}} \text{d}a \\ &+ \!\!\int_{\mathcal{S}} p^*(s \to s', 1, \pi^*) \frac{1}{\beta} \int_{a'}  \nabla_\theta \pi_\theta(a'|s')\frac{e^{\beta Q_{\pi_\theta}^\beta(s', a')}}{e^{\beta V_{\pi_\theta}^\beta(s')}} \text{d}a' \! \text{d}s' \\
    &+\!\!\int_{\mathcal{S}} \!\!p^*(s \to s', 1, \pi^*) \!\! \int_{\mathcal{S}} \!\!\! p^*(s' \to s'', 1, \pi^*) \nabla_\theta V_{\pi_\theta}^\beta(s'') \text{d}s'' \! \text{d}s'\\ 
    &\overset{(c)}{=} \frac{1}{\beta} \int_\mathcal{A} \nabla_\theta \pi_\theta(a|s) \frac{ e^{\beta Q_{\pi_\theta}^\beta(s, a)}}{ e^{\beta V_{\pi_\theta}^\beta(s)}} \text{d}a\\ & \!\!+ \int_{\mathcal{S}} \!\!p^*(s\to s', 1, \pi^*) \frac{1}{\beta} \int_{a'} \nabla_\theta \pi_\theta(a'|s') \frac{e^{\beta Q_{\pi_\theta}^\beta(s', a')}}{e^{\beta V_{\pi_\theta}^\beta(s')}} \text{d}a' \text{d}s' \\
    &+\int_{\mathcal{S}} p^*(s\to s'', 2, \pi^*) \nabla_\theta V_{\pi_\theta}^\beta(s'')  \text{d}s''\\
    & \vdots\\
    &=\!\! \int_{\mathcal{S}} \sum_{t=0}^T p^*(s \to s', t, \pi^*)  \frac{1}{\beta} \! \! \int_\mathcal{A} \nabla_\theta \pi_\theta(a|s) \frac{e^{\beta Q_{\pi_\theta}^\beta(s, a)}}{e^{\beta V_{\pi_\theta}^\beta(s)}} \text{d}a  \text{d}s'\!\!,
    \end{aligned}
\end{equation*}
where in (a) and (b) we used the Leibniz integral rule to exchange order of derivative and integration (Assumption 1), and in (c) we have used Fubini's theorem to exchange the order of integration (Assumption 2). Now we compute the entropic risk measure over $p_1(s)$,
\begin{equation*}
    \begin{aligned}
    \nabla_\theta J(\pi_\theta) &= \nabla_\theta \frac{1}{\beta} \log \int_\mathcal{S} p_1(s)e^{\beta V_{\pi_\theta}^\beta(s)} \text{d}s\\
    &\overset{(d)}{=} \int_\mathcal{S} \frac{p_1(s) e^{\beta V_{\pi_\theta}^\beta(s)}}{\int_\mathcal{S} p_1(s)e^{\beta V_{\pi_\theta}^\beta(s)} \text{d}s}  \nabla_\theta V_{\pi_\theta}^\beta(s) \text{d}s \\
    &= \int_\mathcal{S} p_1^*(s) \nabla_\theta V_{\pi_\theta}^\beta(s) \text{d}s\\
    &= \int_\mathcal{S} \int_{\mathcal{S}} \sum_{t=0}^T  p_1^*(s) p^*(s \to s', t, \pi^*)
    \end{aligned}
\end{equation*}
\begin{equation*}
    \begin{aligned}
    &\frac{1}{\beta} \int_\mathcal{A} \nabla_\theta \pi_\theta(a|s) \frac{e^{\beta Q_{\pi_\theta}^\beta(s, a)}}{e^{\beta V_{\pi_\theta}^\beta(s)}} \text{d}a  \text{d}s' \text{d}s \\
    &\overset{(e)}{=} \frac{1}{\beta} \int_{\mathcal{S}} \rho_{\pi}^*(s) \int_\mathcal{A} \nabla_\theta \pi_\theta(a|s) \frac{e^{\beta Q_{\pi_\theta}^\beta(s, a)}}{e^{\beta V_{\pi_\theta}^\beta(s)}} \text{d}a  \text{d}s,
    \end{aligned}
\end{equation*}
where in (d) we used the Leibniz integral rule to exchange derivative and integral (Assumption 1), and in (e) we used Fubini's theorem to exchange the order of integration (Assumption 2).
\end{proof}

\subsection{Proof of Risk-Sensitive Deterministic Policy Gradient Theorem}
This proof considers similar regularity assumptions as in \cite{silver2014deterministic}:

\paragraph{Assumption 1:} $p(s'|s, a)$, $\nabla_a p(s'|s, a)$, $\mu_\theta(s)$, $\nabla_\theta \mu_\theta(s)$, $r(s, a)$, $\nabla_a r(s, a)$, $p_1(s)$ are continuous in $s$, $a$, $s'$ and $\theta$.

\paragraph{Assumption 2:} there exists a $b$ and $L$ such that $\sup_s p_1(s) < b$, $\sup_{s,a,s'} p(s'|s, a) < b$, $\sup_{s,a} r(s, a) < b$, $\sup_{s,a,s'} ||\nabla_{a} p(s'|s, a)|| < L$, and $\sup_{s,a} ||\nabla_a r(s, a)|| < L$.

\paragraph{Assumption 3:} $\mathcal{A} = \mathbb{R}^m$ and $\mathcal{S}$ is a compact subset of $\mathbb{R}^d$.

In particular, assumption 1 implies that $V_{\mu_\theta}^\beta(s)$ and $\nabla_\theta V_{\mu_\theta}^\beta(s)$ are continuous functions of $\theta$ and $s$ and the compactness of $\mathcal{S}$ further implies that for any $\theta$, $||\nabla_\theta V_{\mu_\theta}^\beta(s)||$, $||\nabla_a Q_{\mu_\theta}^\beta(s, a)|_{a=\mu_\theta(s)}||$ and $||\nabla_\theta \mu_\theta(s)||$ are bounded functions of s. In addition, we define $p_1^*(s)$ as the initial distribution proportional to $p_1(s) e^{\beta V_{\mu_\theta}^\beta(s)}$, the distribution of $s'$ after transitioning for $t$ time steps from state $s$ using dynamics $p^*(s'|s,a)$ and deterministic policy $\mu_\theta(s)$ as $p^*(s\to s', t, \mu)$, and $\rho_\mu^*(s') := \int_{\mathcal{S}} \sum_{t=0}^T  p_1^*(s) p^*(s \to s', t, \mu) \text{d}s$. Now we present our result,
\renewcommand\thetheorem{3.2}
\begin{theorem}
(Risk-Sensitive Deterministic Policy Gradient Theorem). The gradient w.r.t. $J^\beta(\mu_\theta)$ is given by,
\begin{gather*}
    \nabla_\theta J^\beta(\mu_\theta) = \int_\mathcal{S} \rho_{\mu}^*(s) \nabla_\theta \mu_\theta(s) \nabla_a Q_{\mu_\theta}^\beta(s,a)|_{a = \mu_\theta(s)} \mathrm{d}s.
\end{gather*}
\end{theorem}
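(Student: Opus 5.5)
We mirror the proof of the stochastic theorem (Theorem 3.1) and the deterministic argument of \citet{silver2014deterministic}. For a deterministic policy the soft Bellman equation for $V$ has no action integral, so $V_{\mu_\theta}^\beta(s) = Q_{\mu_\theta}^\beta(s,\mu_\theta(s))$. The $Q$-backup becomes
\begin{equation*}
Q_{\mu_\theta}^\beta(s,a) = r(s,a) + \tfrac{1}{\beta}\log\int_{\mathcal{S}} p(s'|s,a)\, e^{\beta V_{\mu_\theta}^\beta(s')}\,\mathrm{d}s'.
\end{equation*}
We differentiate $V_{\mu_\theta}^\beta(s) = r(s,\mu_\theta(s)) + \frac{1}{\beta}\log\int p(s'|s,\mu_\theta(s)) e^{\beta V_{\mu_\theta}^\beta(s')}\mathrm{d}s'$ with respect to $\theta$. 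The parameter enters in two places: through the action $a=\mu_\theta(s)$, and through the downstream value $V_{\mu_\theta}^\beta(s')$.

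By the chain rule, and the Leibniz rule justified by Assumption 1 (continuity of $\nabla_a p$ and $\nabla_a r$, with the bounds of Assumption 2 giving domination), the first contribution is $\nabla_\theta \mu_\theta(s)\,\nabla_a Q_{\mu_\theta}^\beta(s,a)|_{a=\mu_\theta(s)}$. The second contribution is
\begin{equation*}
\frac{\int p(s'|s,\mu_\theta(s)) e^{\beta V_{\mu_\theta}^\beta(s')}\nabla_\theta V_{\mu_\theta}^\beta(s')\,\mathrm{d}s'}{\int p(s'|s,\mu_\theta(s)) e^{\beta V_{\mu_\theta}^\beta(s')}\,\mathrm{d}s'}.
\end{equation*}
This is exactly $\int p^*(s\to s',1,\mu)\nabla_\theta V_{\mu_\theta}^\beta(s')\,\mathrm{d}s'$ under the twisted dynamics $p^*$ with action $\mu_\theta(s)$. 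No twisted policy appears, because the policy is a point mass and its twist leaves it unchanged. This explains why the statement contains only $\rho_\mu^*$ built from $p^*$ and $\mu$.

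We then unroll this recursion exactly as in step (c) of the stochastic proof. Substituting the expression for $\nabla_\theta V_{\mu_\theta}^\beta(s')$ repeatedly and using Fubini's theorem (boundedness from Assumptions 2–3) to compose the one-step kernels gives $p^*(s\to s'',k,\mu)$. The recursion terminates after $T$ steps because the horizon is finite, so the tail term vanishes. This yields
\begin{equation*}
\nabla_\theta V_{\mu_\theta}^\beta(s) = \int_{\mathcal{S}} \sum_{t=0}^T p^*(s\to s',t,\mu)\,\nabla_\theta\mu_\theta(s')\nabla_a Q_{\mu_\theta}^\beta(s',a)|_{a=\mu_\theta(s')}\,\mathrm{d}s'.
\end{equation*}

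Finally, we write $J^\beta(\mu_\theta)=\frac1\beta\log\int p_1(s)e^{\beta V_{\mu_\theta}^\beta(s)}\mathrm{d}s$. Differentiating as in step (d) gives $\int p_1^*(s)\nabla_\theta V_{\mu_\theta}^\beta(s)\,\mathrm{d}s$. Inserting the unrolled expression and swapping integrals (Fubini) collects the weights into $\rho_\mu^*$, which gives the claim.

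The main obstacle is justifying the differentiation through the action argument. We need $Q_{\mu_\theta}^\beta(s,\cdot)$ to be differentiable in $a$, with $\nabla_a$ passing inside both $\int p(s'|s,a)e^{\beta V(s')}\mathrm{d}s'$ and the logarithm. This requires the normalizer $\int p\,e^{\beta V}$ to be bounded away from zero, which holds because $V$ is bounded on compact $\mathcal{S}$ given bounded rewards and a finite horizon. We also need a uniform dominating bound $L\,e^{|\beta| b T}$ on the integrand, which is where the constant $L$ of Assumption 2 enters.
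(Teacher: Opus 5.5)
Your proposal is correct and follows the same route as the paper's proof: differentiate $V_{\mu_\theta}^\beta(s)=Q_{\mu_\theta}^\beta(s,\mu_\theta(s))$ through the action (giving $\nabla_\theta\mu_\theta(s)\nabla_a Q_{\mu_\theta}^\beta(s,a)|_{a=\mu_\theta(s)}$) and through the downstream value (giving a one-step expectation under the twisted dynamics $p^*$), unroll to $\sum_{t=0}^T p^*(s\to s',t,\mu)$, and then combine $\nabla_\theta J^\beta=\int p_1^*(s)\nabla_\theta V_{\mu_\theta}^\beta(s)\,\mathrm{d}s$ with Fubini to collect $\rho_\mu^*$. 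Your justification of the normalizer and the dominating bound is more explicit than the paper's bare citation of Assumptions 1--2; note only that Assumption 2 bounds $r$ from above, so for $\beta<0$ the lower bound on $V_{\mu_\theta}^\beta$ should come from its continuity on compact $\mathcal{S}$, not from bounded rewards.
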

\begin{proof}
\begin{equation*}
    \begin{aligned}
        \nabla_\theta V_{\mu_\theta}^\beta(s) &= \nabla_\theta Q_{\mu_\theta}^\beta(s, \mu_\theta(s)) \\
        &= \nabla_\theta [r(s, \mu_\theta(s)) \\ & + \frac{1}{\beta} \log \int_{\mathcal{S}} p(s'|s, \mu_\theta(s)) e^{\beta V_{\mu_\theta}^\beta(s')} \text{d}s']\\ 
        &=  \nabla_\theta \mu_\theta(s) \nabla_a r(s, a)|_{a=\mu_\theta(s)} \\ &+ \frac{1}{\beta\int_{\mathcal{S}} p(s'|s, \mu_\theta(s)) e^{\beta V_{\mu_\theta}^\beta(s')} \text{d}s'} \\ &\nabla_\theta \int_{\mathcal{S}} p(s'|s, \mu_\theta(s)) e^{\beta V_{\mu_\theta}^\beta(s')} \text{d}s'\\ 
\end{aligned}
\end{equation*}
\begin{equation*}
    \begin{aligned}
        &\overset{(a)}{=} \nabla_\theta \mu_\theta(s) \nabla_a r(s, a)|_{a=\mu_\theta(s)} \\& + \frac{1}{\beta \int_{\mathcal{S}} p(s'|s, \mu_\theta(s)) e^{\beta V_{\mu_\theta}^\beta(s')} \text{d}s'}  \\ &\int_{\mathcal{S}} \nabla_\theta \mu_\theta(s) \nabla_a p(s'|s, a)|_{a=\mu_\theta(s)} e^{\beta V_{\mu_\theta}^\beta(s')} \text{d}s' \\
        &+ \int_{\mathcal{S}}  \frac{p(s'|s, \mu_\theta(s)) e^{\beta V_{\mu_\theta}^\beta(s')}}{\int_{\mathcal{S}} p(s'|s, \mu_\theta(s)) e^{\beta V_{\mu_\theta}^\beta(s')} \text{d}s'} \nabla_\theta V_{\mu_\theta}^\beta(s') \text{d}s' \\ 
        &= \nabla_\theta \mu_\theta(s)[\nabla_a r(s, a)|_{a=\mu_\theta(s)} \\ &+\frac{1}{\beta \int_{\mathcal{S}} p(s'|s, \mu_\theta(s)) e^{\beta V_{\mu_\theta}^\beta(s')} \text{d}s'} \\ & \int_{\mathcal{S}} \nabla_a p(s'|s, a)|_{a=\mu_\theta(s)} e^{\beta V_{\mu_\theta}^\beta(s')} \text{d}s' ]\\ 
        &+ \int_{\mathcal{S}} \frac{p(s'|s, \mu_\theta(s)) e^{\beta V_{\mu_\theta}^\beta(s')}}{\int_{\mathcal{S}} p(s'|s, \mu_\theta(s)) e^{\beta V_{\mu_\theta}^\beta(s')} \text{d}s'}   \nabla_\theta V_{\mu_\theta}^\beta(s') \text{d}s'\\ 
        &= \nabla_\theta \mu_\theta(s) \nabla_a [ r(s, a) \\ &+ \frac{1}{\beta} \log \int_{\mathcal{S}} p(s'|s, a) e^{\beta V_{\mu_\theta}^\beta(s')} \text{d}s' ]|_{a=\mu_\theta(s)} \\ &+ \int_{\mathcal{S}} p^*(s'|s, \mu_\theta(s)) \nabla_\theta V_{\mu_\theta}^\beta(s') \text{d}s' \\ 
        &= \nabla_\theta \mu_\theta(s) \nabla_a Q_{\mu_\theta}^\beta(s,a)|_{a=\mu_\theta(s)} \\ &+ \int_{\mathcal{S}} p^*(s'|s, \mu_\theta(s)) \nabla_\theta V_{\mu_\theta}^\beta(s') \text{d}s'\\
        &= \nabla_\theta \mu_\theta(s) \nabla_a Q_{\mu_\theta}^\beta(s,a)|_{a=\mu_\theta(s)} \\ &+ \int_{\mathcal{S}} p^*(s \to s', 1, \mu_\theta) \nabla_\theta V_{\mu_\theta}^\beta(s') \text{d}s' \\
        &= \nabla_\theta \mu_\theta(s) \nabla_a Q_{\mu_\theta}^\beta(s,a)|_{a=\mu_\theta(s)} \\
        & + \int_{\mathcal{S}} p^*(s \to s', 1, \mu_\theta) \nabla_\theta \mu_\theta(s) \nabla_a Q_{\mu_\theta}^\beta(s,a)|_{a=\mu_\theta(s)} \text{d}s' \\ 
        &+\!\! \!\int_{\mathcal{S}}\!\! p^*(s \to s', 1, \mu_\theta) \!\!\!\int_{\mathcal{S}} \!\!p^*(s' \to s'', 1, \mu_\theta) \nabla_\theta V_{\mu_\theta}^\beta(s'') \text{d}s'' \!\text{d}s' \\ 
        &\overset{(b)}{=} \nabla_\theta \mu_\theta(s) \nabla_a Q_{\mu_\theta}^\beta(s,a)|_{a=\mu_\theta(s)} \\
        & + \int_{\mathcal{S}} p^*(s \to s', 1, \mu_\theta) \nabla_\theta \mu_\theta(s) \nabla_a Q_{\mu_\theta}^\beta(s,a)|_{a=\mu_\theta(s)} \text{d}s' \\ 
        &+ \int_{\mathcal{S}} p^*(s \to s', 2, \mu_\theta) \nabla_\theta V_{\mu_\theta}^\beta(s') \text{d}s' \\ 
        & \vdots \\ 
        &=\!\!\! \!\int_{\mathcal{S}} \! \sum_{t=0}^T p^*(s \to s', t, \mu_\theta) \!\nabla_\theta \mu_\theta(s')  \! \nabla_a \!Q_{\mu_{\theta}}^\beta \!(s', a)|_{a = \mu_\theta(s')} \text{d}s'\!\!,
    \end{aligned}
\end{equation*}
where in (a) we used the Leibniz integral rule to exchange order of derivative and integration (Assumption 1), and in (b) we used Fubini's theorem to exchange the order of integration (Assumption 2). Now we compute the entropic risk measure over $p_1(s)$,
\begin{equation*}
    \begin{aligned}
    \nabla_\theta J^\beta(\mu_\theta) &= \nabla_\theta \frac{1}{\beta} \log \int_\mathcal{S} p_1(s) e^{\beta V_{\mu_\theta}^\beta(s)} \text{d}s\\
    &\overset{(c)}{=} \int_\mathcal{S} \frac{p_1(s) e^{\beta V_{\mu_\theta}^\beta(s)}}{\int_\mathcal{S} p_1(s) e^{\beta V_{\mu_\theta}^\beta(s)} \text{d}s}  \nabla_\theta V_{\mu_\theta}^\beta(s) \text{d}s\\
    &= \int_\mathcal{S} p_1^*(s) \nabla_\theta V_{\mu_\theta}^\beta(s) \text{d}s\\
    &= \int_\mathcal{S} \int_{\mathcal{S}} \sum_{t=0}^T  p_1^*(s) p^*(s \to s', t, \mu_\theta) \\ & \nabla_\theta \mu_\theta(s') \nabla_a Q_{\mu_{\theta}}^\beta(s', a)|_{a = \mu_\theta(s')}  \text{d}s' \text{d}s \\
    &\overset{(d)}{=} \int_{\mathcal{S}} \rho_{\mu}^*(s)  \nabla_\theta \mu_\theta(s) \nabla_a Q_{\mu_{\theta}}^\beta(s, a)|_{a = \mu_\theta(s)}  \text{d}s,
    \end{aligned}
\end{equation*}
where in (c) we used the Leibniz integral rule to exchange derivative and integral (Assumption 1), and in (d) we used Fubini's theorem to exchange the order of integration (Assumption 2).
\end{proof}

\subsection{Proof of Policy Improvement for the Off-Policy Risk-Sensitive Deterministic Setting}
We consider the off-policy performance objective for a deterministic policy, soft-value functions of a target deterministic policy weighted by the state distribution of the behavior policy,
\begin{equation*}
    J_\beta(\mu_\theta) = \sum_{s \in \mathcal{S}} \rho_b(s) V_{\mu_\theta}^\beta(s), 
\end{equation*}
where $\rho_b$ is the state distribution under the behavior policy $b(a|s)$. Similarly to \citep{degris2012off}, we propose an approximation of the gradient:
\begin{equation*}
    \nabla_\theta J_b^\beta(\mu_\theta) \approx \sum_{s\in S} \rho_b(s) \nabla_\theta \mu_\theta(s) \nabla_a Q_{\mu_\theta}^\beta(s,a)|_{a = \mu_\theta(s)} := g(\mu_\theta)
\end{equation*}
This approximation drops a term that depends on $\nabla_\theta Q_{\mu_\theta}(s, a)$. We show that this approximation results in policy improvement when using a tabular representation for the policy.
\renewcommand\thetheorem{3.3}
\begin{theorem}
(Deterministic Policy Improvement). Given a policy $\mu$ with parameters $\theta$ and a linear tabular representation, let $\theta' = \theta +\alpha g(\mu_\theta)$. Then there exist an $\epsilon$ such that for all $\alpha < \epsilon$:
\begin{gather*}
    V_{\mu_{\theta}}^\beta(s) \leq V_{\mu_{\theta'}}^\beta(s)
\end{gather*}
for all $s \in S$.
\end{theorem}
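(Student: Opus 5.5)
The plan is to follow the Degris et al.\ tabular argument: show that the update improves the action-value at every state, then turn that local improvement into a global one with a risk-sensitive policy improvement lemma based on monotonicity of the nonlinear Bellman operator.

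\textbf{Step 1 (tabular structure).} With a linear tabular representation, $\theta$ splits into per-state blocks $\theta_s$ with $\mu_\theta(s)=\theta_s$, so $\nabla_{\theta_s}\mu_\theta(s')=\mathbb{I}[s=s']\,I$. The block of $g(\mu_\theta)$ belonging to state $s$ is then $\rho_b(s)\nabla_a Q^\beta_{\mu_\theta}(s,a)|_{a=\mu_\theta(s)}$, and the update becomes
\begin{equation*}
\mu_{\theta'}(s)=\mu_\theta(s)+\alpha\rho_b(s)\nabla_a Q^\beta_{\mu_\theta}(s,a)|_{a=\mu_\theta(s)} .
\end{equation*}
This is a gradient-ascent step on $a\mapsto Q^\beta_{\mu_\theta}(s,a)$ with $\theta$ held fixed. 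A first-order Taylor expansion, with a remainder controlled by continuity of $\nabla_a Q^\beta$ (assumptions as in Theorem 3.2), gives $\epsilon_s>0$ such that $Q^\beta_{\mu_\theta}(s,\mu_{\theta'}(s))\ge Q^\beta_{\mu_\theta}(s,\mu_\theta(s))=V^\beta_{\mu_\theta}(s)$ for all $\alpha<\epsilon_s$. If the gradient vanishes the two actions coincide. If $\rho_b(s)=0$ the policy at $s$ is unchanged, so the inequality holds with equality. To get a uniform $\epsilon=\min_s\epsilon_s>0$ I would use finiteness of $\mathcal S$, which the sum in $J_b^\beta$ implicitly assumes. \emph{This uniformity is the step I expect to be most delicate:} for a general compact $\mathcal S$ one would need a uniform Lipschitz bound on $\nabla_a Q^\beta$ to replace the Taylor argument.

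\textbf{Step 2 (risk-sensitive policy improvement lemma).} Next I show that $Q^\beta_{\mu}(s,\mu'(s))\ge V^\beta_\mu(s)$ for all $s$ implies $V^\beta_{\mu'}(s)\ge V^\beta_\mu(s)$ for all $s$. Define
\begin{equation*}
T_{\mu'}V(s)=r(s,\mu'(s))+\tfrac1\beta\log\mathbb{E}_{p(s'|s,\mu'(s))}\bigl[e^{\beta V(s')}\bigr].
\end{equation*}
This operator is monotone for either sign of $\beta$, because $x\mapsto\frac1\beta\log x$ and $x\mapsto e^{\beta x}$ are both increasing when $\beta>0$ and both decreasing when $\beta<0$. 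The hypothesis reads $T_{\mu'}V^\beta_\mu\ge V^\beta_\mu$. Applying $T_{\mu'}$ repeatedly and using monotonicity gives $T_{\mu'}^kV^\beta_\mu\ge V^\beta_\mu$ for every $k$. In the finite-horizon setting, $V^\beta_{\mu'}$ is obtained by applying $T_{\mu'}$ $T$ times to the terminal value. One way to handle this is to carry the time index on the value functions and induct backward from the horizon. The other is to telescope exactly as in the classical proof: expand $V^\beta_\mu(s)\le Q^\beta_\mu(s,\mu'(s))$, then substitute $V^\beta_\mu(s_{t+1})\le Q^\beta_\mu(s_{t+1},\mu'(s_{t+1}))$ inside the log-expectation, and iterate.

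\textbf{Step 3 (combine).} Apply Step 2 with $\mu=\mu_\theta$ and $\mu'=\mu_{\theta'}$, using the per-state inequality from Step 1, to get $V^\beta_{\mu_\theta}(s)\le V^\beta_{\mu_{\theta'}}(s)$ for all $s$ whenever $\alpha<\epsilon$.
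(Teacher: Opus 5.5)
Your proposal is correct and follows essentially the same route as the paper: the tabular parameterization makes $g(\mu_\theta)$ an independent ascent step on $a\mapsto Q^\beta_{\mu_\theta}(s,a)$ at each state, which gives $V^\beta_{\mu_\theta}(s)\le Q^\beta_{\mu_\theta}(s,\mu_{\theta'}(s))$, and this is then unrolled through the entropic Bellman recursion as in your telescoping option (the paper writes the chain in exponentiated form for $\beta>0$ and leaves $\beta<0$ to ``similar arguments''). Your version is more careful than the paper's on three points: the uniform $\epsilon$ via finiteness of $\mathcal{S}$, the degenerate cases ($\rho_b(s)=0$ or zero gradient), and handling both signs of $\beta$ at once through monotonicity of $T_{\mu'}$. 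One caution: your backward-induction alternative would need the improvement for every time-indexed $Q_t$, which a single stationary tabular step does not provide, so the telescoping route is the one that fits the paper's stationary-value setup.
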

\begin{proof}
Notice first that for any $s$, the gradient $\nabla_\theta \mu_\theta(s) \nabla_a Q_{\mu_\theta}^\beta(s,a) |_{a=\mu_\theta(s)}$ is the direction that increase $Q_{\mu_{\theta}}(s,\mu_\theta(s))$ for a fixed soft-value function $Q_{\mu_\theta}$. For an appropriate step size $\alpha$, we can guarantee that
\begin{equation*}
    J_\beta(\mu_\theta) = \sum_{s \in \mathcal{S}} \rho_b(s) Q_{\mu_\theta}^\beta(s, \mu_\theta(s)) \leq \sum_{s \in \mathcal{S}} \rho_b(s) Q_{\mu_\theta}^\beta(s,\mu_{\theta'}(s)) 
\end{equation*}
Given that $\mu_\theta$ has a tabular representation, we have that
\begin{equation*}
\begin{aligned}
    V_{\mu_\theta}^\beta(s) \! = \!Q_{\mu_\theta}^\beta(s,\mu_\theta(s)) \!\\  \leq \! Q_{\mu_\theta}^\beta(s,\mu_{\theta'}(s))\! =\! \mathbb{E}_{a \sim \mu_{\theta'}}[Q_{\mu_\theta}^\beta(s,a)], 
\end{aligned}
\end{equation*}
because the policy can be updated independently for each state with separate weights for each state. Hence, for any $s \in \mathcal{S}$ and $\beta > 0$:
\begin{equation*}
    \begin{aligned}
    e^{\beta V_{\mu_\theta}^\beta(s)} &\leq e^{\beta Q_{\mu_\theta}^\beta(s,\mu_{\theta'}(s))} \\
    &= \mathbb{E}_{a \sim \mu_{\theta'}(s)} \mathbb{E}_{r,s' \sim p(\cdot|s,a)}[e^{\beta r} e^{\beta V_{\mu_\theta}^\beta(s')}] \\
    &\leq \mathbb{E}_{a \sim \mu_{\theta'}(s)} \mathbb{E}_{r,s' \sim p(\cdot|s,a)}[\\ &e^{\beta r} \mathbb{E}_{a' \sim \mu_{\theta'}(s')} [e^{\beta Q_{\mu_\theta}^\beta(s', a')}]] \\ 
    & \vdots \\
    & = \mathbb{E}_{a \sim \mu_{\theta'}(s)}[e^{\beta Q_{\mu_{\theta'}}^\beta(s,a)}] \\
    & = e^{\beta V_{\mu_{\theta'}}^\beta(s)}.
    \end{aligned}
\end{equation*}
Hence, we can conclude that $V_{\mu_{\theta}}^\beta(s) \leq V_{\mu_{\theta'}}^\beta(s)$ for all $s\in \mathcal{S}$. The proof for $\beta < 0$ follows similar arguments.
\end{proof}

\subsection{Proof of Policy Improvement for the Off-Policy Risk-Sensitive Stochastic Setting}
We now consider the off-policy performance objective for a stochastic policy, soft-value functions of a target stochastic policy weighted by the state distribution of the behavior policy,
\begin{equation*}
    J_b^\beta(\pi_\theta) = \frac{1}{\beta}\sum_{s \in \mathcal{S}} \rho_b(s) e^{\beta V_{\pi_\theta}^\beta(s)} 
\end{equation*}
where $\rho_b$ is the state distribution under the behavior policy $b(a|s)$. Similarly to \citep{degris2012off}, we propose an approximation of the gradient:
\begin{equation*}
    \nabla_\theta J_b^\beta(\pi_\theta) \! \approx \! \frac{1}{\beta} \sum_{s\in S} \rho_b(s) \sum_{a \in A} \nabla_\theta \pi_\theta (a|s)e^{\beta Q_{\pi_\theta}^\beta(s,a)} \!\! := \! g(\pi_\theta)
\end{equation*}
This approximation drops a term that depends on $\nabla_\theta Q_{\pi_\theta}(s, a)$. We show that this approximation results in policy improvement when using a tabular representation for the policy.
\renewcommand\thetheorem{3.4}
\begin{theorem}
(Stochastic Policy Improvement). Given a policy $\pi$ with parameters $\theta$ and a tabular representation, let $\theta' = \theta +\alpha g(\pi_\theta)$. Then there exist an $\epsilon > 0$ such that for all positive $\alpha < \epsilon$,
\begin{gather*}
    V_{\pi_{\theta}}^\beta(s) \leq V_{\pi_{\theta'}}^\beta(s)
\end{gather*}

for all $s \in S$.
\end{theorem}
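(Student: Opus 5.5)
The approach mirrors the deterministic case (Theorem 3.3) and the argument of \citep{degris2012off}: a one-step improvement in the exponential action-value at every state, followed by unrolling the exponential Bellman equation. I treat $\beta>0$ in detail. For $\beta<0$ the inequalities on $e^{\beta(\cdot)}$ reverse, but the factor $\frac{1}{\beta}$ in $g(\pi_\theta)$ reverses the ascent direction as well, so the conclusion in terms of $V^\beta$ is the same.

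\textbf{Step 1: local one-step improvement.} With a tabular representation, $\theta$ splits into separate blocks $\theta_s$, and $\pi_\theta(\cdot|s)$ depends only on $\theta_s$. Hence the $s$-block of $g(\pi_\theta)$ is
\[
\frac{1}{\beta}\rho_b(s)\sum_{a}\nabla_{\theta_s}\pi_\theta(a|s)\,e^{\beta Q^\beta_{\pi_\theta}(s,a)}
= \frac{\rho_b(s)}{\beta}\nabla_{\theta_s} F_s(\theta_s),
\]
where
\[
F_s(\theta_s):=\sum_a \pi_{\theta_s}(a|s)\,e^{\beta Q^\beta_{\pi_\theta}(s,a)},
\]
with $Q^\beta_{\pi_\theta}$ held fixed at the current $\theta$. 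The update therefore moves each block along $\frac{\rho_b(s)}{\beta}\nabla F_s$, which for $\beta>0$ is a nonnegative multiple of the gradient. A first-order Taylor expansion, using continuity of $\nabla_\theta\pi_\theta$ and finiteness of $\mathcal S$, gives
\[
F_s(\theta_s+\alpha g_s)=F_s(\theta_s)+\alpha\frac{\rho_b(s)}{\beta}\|\nabla F_s\|^2+o(\alpha).
\]
Take a minimum of finitely many thresholds, one per state with nonzero gradient. States with zero gradient leave $\pi(\cdot|s)$ unchanged to first order. Under the linear/softmax tabular parameterization this step needs care, but it is the same as in Degris et al. This yields $\epsilon>0$ such that for all $0<\alpha<\epsilon$ and every $s$,
\[
e^{\beta V^\beta_{\pi_\theta}(s)}=\sum_a\pi_\theta(a|s)e^{\beta Q^\beta_{\pi_\theta}(s,a)}\le \sum_a \pi_{\theta'}(a|s)e^{\beta Q^\beta_{\pi_\theta}(s,a)}.
\]

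\textbf{Step 2: unrolling.} Write $Z(s)=e^{\beta V^\beta_{\pi_\theta}(s)}$. The exponential Bellman equation gives $e^{\beta Q^\beta_{\pi_\theta}(s,a)}=e^{\beta r(s,a)}\mathbb E_{s'}[Z(s')]$. Substituting the Step 1 inequality for $Z(s')$ is legitimate because $e^{\beta r}>0$ and the expectations preserve order. Repeating this yields
\[
Z(s)\le \mathbb E_{\pi_{\theta'},p}\Bigl[e^{\beta\sum_{t=1}^{k} r_t}\,Z(s_{k+1})\Bigr]
\]
for every $k$. In the finite-horizon setting the recursion terminates at $T$, where $Z\equiv 1$ after the last step, so the right-hand side becomes $e^{\beta V^\beta_{\pi_{\theta'}}(s)}$. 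In the discounted or infinite-horizon case one would instead argue via boundedness of rewards and $Z$, with a limit. Applying $\frac1\beta\log$, which is increasing for $\beta>0$, gives $V^\beta_{\pi_\theta}(s)\le V^\beta_{\pi_{\theta'}}(s)$.

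\textbf{Main obstacle: time-indexing.} The horizon is finite, so $V^\beta$ is really time-indexed, $V^\beta_t(s)$. Step 1 must then hold simultaneously for all $(t,s)$, yet the tabular policy shares $\theta_s$ across time steps. I would handle this by assuming stationarity, as the paper implicitly does, or by indexing the table by $(t,s)$, in which case the finite minimum over thresholds still works.

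\textbf{Sign of $\beta$.} For $\beta<0$, $\frac{1}{\beta}\nabla F_s$ is a descent direction for $F_s$. So $\sum_a\pi_{\theta'}e^{\beta Q}\le e^{\beta V}$: the exponential quantities decrease. Unrolling gives $Z_{\theta'}\le Z_\theta$, and $\frac1\beta\log$, now decreasing, flips this back into the stated inequality.
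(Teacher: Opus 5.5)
Your proposal is correct and follows the paper's proof essentially step for step. The paper also proceeds through a state-wise one-step improvement, $\sum_a \pi_{\theta'}(a|s)e^{\beta Q^\beta_{\pi_\theta}(s,a)} \ge e^{\beta V^\beta_{\pi_\theta}(s)}$ for $\beta>0$, obtained from the tabular decoupling of the update, and then unrolls the exponential Bellman equation to reach $e^{\beta V^\beta_{\pi_{\theta'}}(s)}$. Your version adds details the paper leaves implicit: $\epsilon$ chosen as a minimum of finitely many state-wise thresholds, the explicit sign bookkeeping for $\beta<0$ (where $g$ becomes a descent direction on the exponential quantities and the decreasing map $\frac{1}{\beta}\log$ restores the inequality), and the stationarity and time-indexing caveat.
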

\begin{proof}
Notice first that for any $s$, the gradient $\sum_{a \in \mathcal{A}} \nabla_\theta \pi_\theta (a|s)e^{\beta Q_{\pi_\theta}^\beta(s,a)}$ is the direction that increases $\sum_{a \in \mathcal{A}} \pi_\theta (a|s)e^{\beta Q_{\pi_\theta}^\beta(s,a)}$ for a fixed soft-value function $Q_{\pi_\theta}$. For an appropriate step size $\alpha$, we can guarantee that
\begin{equation*}
\begin{aligned}
    J_\beta(\pi_\theta) = \sum_{s \in \mathcal{S}} \rho_b(s) \sum_{a\in\mathcal{A}} \pi_\theta(a|s)e^{\beta Q_{\pi_\theta}^\beta(s,a)} \\ \leq \sum_{s \in \mathcal{S}} \rho_b(s) \sum_{a\in\mathcal{A}} \pi_{\theta'}(a|s)e^{\beta Q_{\pi_\theta}^\beta(s,a)}
\end{aligned}
\end{equation*}
Given that $\pi_\theta$ has a tabular representation, we have that
\begin{equation*}
    \begin{aligned}
    e^{\beta V_{\pi_\theta}^\beta(s)} = \sum_{a\in\mathcal{A}} \pi_\theta(a|s) e^{\beta Q_{\pi_\theta}^\beta(s,a)} \\ \leq \sum_{a \in \mathcal{A}} \pi_{\theta'}(a|s)e^{\beta Q_{\pi_\theta}^\beta(s,a)} = \mathbb{E}_{a \sim \pi_{\theta'}}[e^{\beta Q_{\pi_\theta}^\beta(s,a)}] 
    \end{aligned}
\end{equation*}
because the policy can be updated independently for each state with separate weights for each state. Hence, for any $s \in \mathcal{S}$ and $\beta > 0$:
\begin{equation*}
    \begin{aligned}
    e^{\beta V_{\pi_\theta}^\beta(s)} &\leq \mathbb{E}_{a \sim \pi_{\theta'}}[ e^{\beta Q_{\pi_{\theta}}^\beta(s,a)}] \\
    &= \mathbb{E}_{a \sim \pi_{\theta'}}[\mathbb{E}_{r,s' \sim p(\cdot|s,a)}[e^{\beta r}  e^{\beta V_{\pi_\theta}^\beta(s')}]] \\
    &\leq \mathbb{E}_{a \sim \pi_{\theta'}}[\mathbb{E}_{r, s' \sim p(\cdot|s,a)}[e^{\beta r} \mathbb{E}_{a' \sim \pi_{\theta'}} [e^{\beta Q_{\pi_\theta}^\beta(s', a')}]]] \\ 
    & \vdots \\
    & = \mathbb{E}_{a \sim \pi_{\theta'}}[e^{\beta Q_{\pi_{\theta'}}^\beta (s,a)}] \\
    & = e^{\beta V_{\pi_{\theta'}}^\beta(s)}.
    \end{aligned}
\end{equation*}

Hence, we can conclude that $V_{\pi_{\theta}}^\beta(s) \leq V_{\pi_{\theta'}}^\beta(s)$ for all $s\in \mathcal{S}$. The proof for $\beta < 0$ follows similar arguments.
\end{proof}

\section{Critic Gradient Stabilization}\label{sec:batch_normalization}
\renewcommand\thefigure{7} 
\begin{figure}[t]
    \centering
     \begin{subfigure}[b]{0.45\textwidth}
    \includegraphics[width=\textwidth]{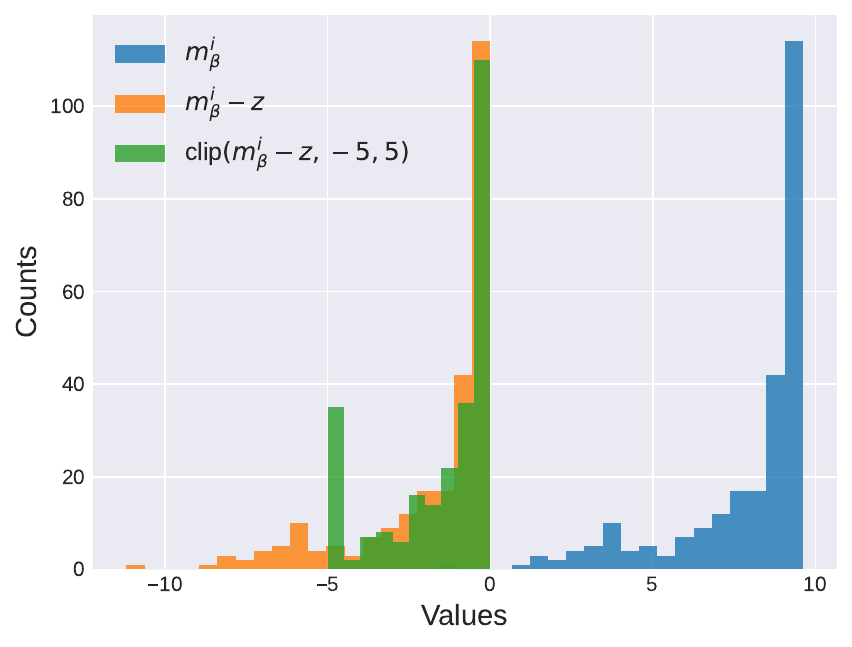}
     \end{subfigure} 
     \caption{\textbf{Histograms for stabilizing operations on a mini-batch.} We compute $m_i$ values and $z$ for a mini-batch of size 256. Histogram of: $m_i$ values (\emph{blue}), $m_i$ values after subtracting $z$ (\emph{orange}), $m_i$ values after subtraction by $z$ and clipping (\emph{green}). 
        }
     \label{fig:histogram}
\end{figure}

In this section, we present our derivation of a constant $z$ that sufficiently normalizes the gradient over the current mini-batch, and our clipping mechanism.  We achieve this by defining,
\begin{equation}
    \!\!\!\!m_{\beta}^i \triangleq Q_\psi(s_t^i, a_t^i) + \max(Q_\psi(s_t^i, a_t^i), \beta r_t^i +Q_{\psi'}(s_{t+1}^i))
\end{equation}
where $\{(s_t^i, a_t^i, s_{t+1}^i, r_t^i)\}_{i=1}^N$ are samples from the current mini-batch.  Then, we set the constant $z$ to $\max_i m_{\beta}^i$ when $\beta > 0$, and to $\min_i m_{\beta}^i$ when $\beta < 0$.  This yields a gradient proportional to $\nabla J^i_{Q}(\psi)$: 
\begin{equation}
      e^{m_{\beta}^i - z} f(Q_\psi(s_t^i, a_t^i), \beta r_t^i +Q_{\psi'}(s_{t+1}^i))
        \nabla Q_\psi(s_t^i, a_t^i)
\end{equation}
Finally, we avoid any numerical overflow (underflow) by clipping the exponent of the leading term in the interval $[-c,c]$ as: $e^{\text{clip}({m_{\beta}^i - z},-c,c)}$ to compute $\nabla J_{\text{clip},Q}^i$.  In \FIG \ref{fig:histogram}, we illustrate the operations performed to the leading terms in a mini-batch during learning in the CartPole environment.

\section{Implementation Details}\label{sec:Mujoco}
\subsection{CartPole environment}
Table \ref{hyperparameter} lists the hyperparameters we used for the CartPole environment.

\begin{table}[ht]
  \caption{Hyperparameters for CartPole environment}
  \label{hyperparameter}
  \centering
  \begin{tabular}{ll}
    \toprule
    \multicolumn{1}{c}{Schedule details}   \\
    Env steps before training  & 10000 steps   \\
    Env steps per epoch  & 1000 steps  \\
    \toprule
    \multicolumn{1}{c}{Network details}   \\
    Discount factor  & 0.99   \\
    Soft target update & 0.005   \\
    Experience buffer $\Dcal_{\text{env}}$ & 10,000  \\
    Value Network  & 2 hidden layers size 128  \\
    Network optimizer & AdamW \\ 
    Non-linear layers & ReLU \\
    Learning rate & 0.0003 \\
    \bottomrule
    \multicolumn{1}{c}{Policy details}   \\
    $\epsilon$ exploration value & 0.1 \\
    \toprule
    \multicolumn{1}{c}{Computational details per run}   \\
    CPU & 1 Altus XE2214GT   \\
    RAM & 5GBs   \\
    GPU & 1 Tesla V100S 32GB  \\
    Running time & 2 hours   \\
    \bottomrule
  \end{tabular}
\end{table}

\subsection{Mujoco environments Hyperparameters}
Each task is run for 1 million environment steps with evaluations every 10000 time steps. Agents are tested for 20 episodes per evaluation. We perform 10 runs of each algorithm with different random seeds and report average and standard deviation. For MG and MVPI, we use the implementations in \citep{luo2023alternative} and follow the same hyperparameters suggested by the authors. We implement R-AC \citep{noorani2023exponential} on top of the TD3, and select its risk parameter from $\{-1,-0.1, -0.05 , -0.01\}$. For rsEAC, we select $\beta$ from $\{-0.05, -0.01, -0.0025\}$. We use the same network architectures and learning rates for all algorithms. Table \ref{hyperparameter_muj} lists the hyperparameters used for rsEAC and all actor-critic algorithms in risk-aware MuJoCo benchmarks.

\begin{table}[t]
  \caption{Hyperparameters for risk-aware MuJoCo}
  \label{hyperparameter_muj}
  \centering
  \begin{tabular}{ll}
    \toprule
    \multicolumn{1}{c}{Schedule details}   \\
    Env steps before training  & 5000 steps   \\
    Env steps per epoch  & 1000 steps  \\
    \toprule
    \multicolumn{1}{c}{Network details}   \\
    Discount factor  & 0.99   \\
    Soft target update & 0.005   \\
    Experience buffer $\Dcal_{\text{env}}$ & 1,000,000  \\
    Actor Architecture  & 2 hidden layers size 256  \\
    Critic Architecture  & 2 hidden layers size 256  \\
    Network optimizer & Adam \\ 
    Non-linear layers & ReLU \\
    Exponential gradient clip & 5 \\
    Learning rate & 0.0003 \\
    Policy noise & 0.2 \\
    Noise clip & 0.5 \\
    Iterations before policy update & 2 \\
    \toprule
    \multicolumn{1}{c}{Computational details per run}   \\
    CPU & 1 Altus XE2214GT   \\
    RAM & 5GBs   \\
    GPU & 1 Tesla V100S 32GB  \\
    Running time & 8 hours   \\
    \bottomrule
  \end{tabular}
\end{table}


\newpage
\section{Pseudocode of rsEAC}\label{sec:pseudo}
This section contains the pseudocode for our algorithm, rsEAC.
\begin{algorithm}[ht]
   \caption{rsEAC}
   \label{alg:example}
\begin{algorithmic}
   \STATE Initialize networks, parameters and buffers:
   \FOR{$t=1$ {\bfseries to} $T$}
   \STATE Select action with exploration noise:
    \STATE $a_t \sim \mu_\theta(s_t) + \epsilon$, $\epsilon \sim \mathcal{N}(0,\sigma^2)$ 
    \STATE Sample next state and reward from environment: 
    \STATE $s_{t+1}, r_t \sim p(s_{t+1}, r_t|s_t,a_t)$
    \STATE Store transition to experience buffer:
    \STATE $\Dcal \leftarrow \Dcal \cup \{(s_t,a_t, s_{t+1}, r_t)$\} 
    \STATE Sample mini-batch of size $N$ from $\mathcal{D}$:
   \STATE $\{(s_t^i, a_t^i, s_{t+1}^i, r_t^i)\}_{i=1}^N \sim \Dcal$ 
   \STATE Sample next actions:
   \STATE $a_{t+1}^i \sim \mu_\theta(s_{t+1}^i) + \epsilon^i$, $\epsilon^i \sim \text{clip}(\mathcal{N}(0,\sigma^2), -c^*, c^*)$
   \STATE Compute target:
   \STATE $y^i_t = \beta r_t^i + \gamma \min_{j=1,2} Q_{\psi_j'}(s_{t+1}^i, a_{t+1}^i)$
   \STATE Compute our gradient $\nabla \!J_{\text{clip},Q}^i(\psi)$:
   \STATE $=  e^{\text{clip}(m_{\beta}^i - z, -c, c)} f(Q_\psi(s_t^i, a_t^i), y_t^i)
        \nabla  Q_\psi(s_t^i, a_t^i)$ 
    \STATE Update critics:
   \STATE $\psi \leftarrow \psi -N^{-1} \sum_i \nabla J_{\text{clip},Q}^i(\psi)$  
   \IF{$t \text{ mod } d$}
   \STATE Compute deterministic policy gradient $\nabla J(\theta)$:
   \STATE $= N^{-1} \sum_i \nabla_{\theta} \mu_\theta(s_t^i) \nabla_{a_t} \frac{1}{\beta} Q_{\psi_1}(s_t^i, a_t^i)|_{a_t^i = \mu_\theta(s_{t}^i)} $ 
   \STATE Update actor:
   \STATE $\theta \leftarrow \theta -\nabla J(\theta)$  
   \STATE Update target networks:
   \STATE $\theta' \leftarrow \tau \theta + (1-\tau)\theta'$
   \STATE $\psi_i' \leftarrow \tau \psi_i + (1-\tau)\psi_i'$
   \ENDIF
   \ENDFOR
\end{algorithmic}
\end{algorithm}
\end{document}